\newtheorem{proposition}{Proposition}
\begin{document}
%
\title{High Level Pattern Classification \\via Tourist Walks in Networks}

\author{Thiago~Christiano~Silva
        and~Liang~Zhao,~\IEEEmembership{Senior~Member,~IEEE}
\IEEEcompsocitemizethanks{\IEEEcompsocthanksitem Thiago Christiano Silva and Liang Zhao
are with the Department of Computer Sciences, Institute of
Mathematics and Computer Science (ICMC), University of São Paulo
(USP), Av. Trabalhador São-carlense, 400, 13560-970, São Carlos,
SP, Brazil.\protect\\
E-mail: \{thiagoch, zhao\}@icmc.usp.br.}
\thanks{}}

\markboth{IEEE Transactions on Neural Networks and Learning Systems}%
{Thiago Christiano Silva and Liang Zhao: High Level Classification via Tourist Walks}

\IEEEcompsoctitleabstractindextext{%
\begin{abstract}

Complex networks refer to large-scale graphs with nontrivial connection patterns. The salient and interesting features that the complex network study offer in comparison to graph theory are the emphasis on the dynamical properties of the networks and the ability of inherently uncovering pattern formation of the vertices. In this paper, we present a hybrid data classification technique combining a low level and a high level classifier. The low level term can be equipped with any traditional classification techniques, which realize the classification task considering only physical features (e.g., geometrical or statistical features) of the input data. On the other hand, the high level term has the ability of detecting data patterns with semantic meanings. In this way, the classification is realized by means of the extraction of the underlying network's features constructed from the input data. As a result, the high level classification process measures the compliance of the test instances with the pattern formation of the training data. Out of various high level perspectives that can be utilized to capture semantic meaning, we utilize the dynamical features that are generated from a tourist walker in a networked environment. Specifically, a weighted combination of transient and cycle lengths generated by the tourist walk is employed for that end. Furthermore, we show that the proposed technique is able to capture the organizational and complex features of the class component from a local to global fashion in a natural and intuitive way by altering the memory size of the tourist walk. Still in this work, we uncover the existence of a critical memory length, we say \emph{complex saturation}, where any values larger than this critical point make no change in the transient and cycle lengths of the network component. Interestingly, our study shows that the proposed technique is able to further improve the already optimized performance of traditional classification techniques. Finally, we apply the proposed technique to the recognition of handwritten digit images and promising results have been obtained.
\end{abstract}

\begin{IEEEkeywords}
High level classification, tourist walks, supervised learning, complex networks.
\end{IEEEkeywords}}

\def \sizeOfFigure {0.22}

\maketitle

\IEEEdisplaynotcompsoctitleabstractindextext

\IEEEpeerreviewmaketitle

\section{Introduction}

\IEEEPARstart{S}{upervised} data classification aims at generating a map from the input data to the corresponding desired output, for a given training set. The constructed map, called a classifier, is used to predict new input instances. Many supervised data classification techniques have been developed \cite{Vapnik1998, Duda2001, Bishop2006}, such as $k$-nearest neighbors, Bayesian classifiers, neural networks, decision trees, committee machines, and so on. In essence, all these techniques train and, consequently, classify unlabeled data items according to the physical features (e.g., distance, similarity or distribution) of the input data. These techniques that predict class labels using only physical features are called \emph{low level classification} techniques \cite{Silva2012HighLevel}.

Usually, the data items are not isolated points in the attribute space, but instead tend to form certain patterns. For example, in Fig. \ref{fig:Motivation-2-Classes}, the two test instances represented by the triangle-shaped are most probably to be classified as pertaining to the square-shaped class if only physical features, such as distances among data instances, are considered. On the other hand, if we take into account the relationships among the data, we intuitively classify the triangle-shaped items as members of the circular-shaped class, since a clear pattern (lozenge) is formed. The human (animal) brain performs both low and high orders of learning and it has facility to identify patterns according to the semantic meaning of the input data. However, this kind of task, in general, is still hard to be performed by computers. Supervised data classification by considering not only physical attributes but also pattern formation is referred to as \emph{high level classification} \cite{Silva2012HighLevel}.

\begin{figure} [!htb]
    \centering
    \includegraphics[scale = \sizeOfFigure]{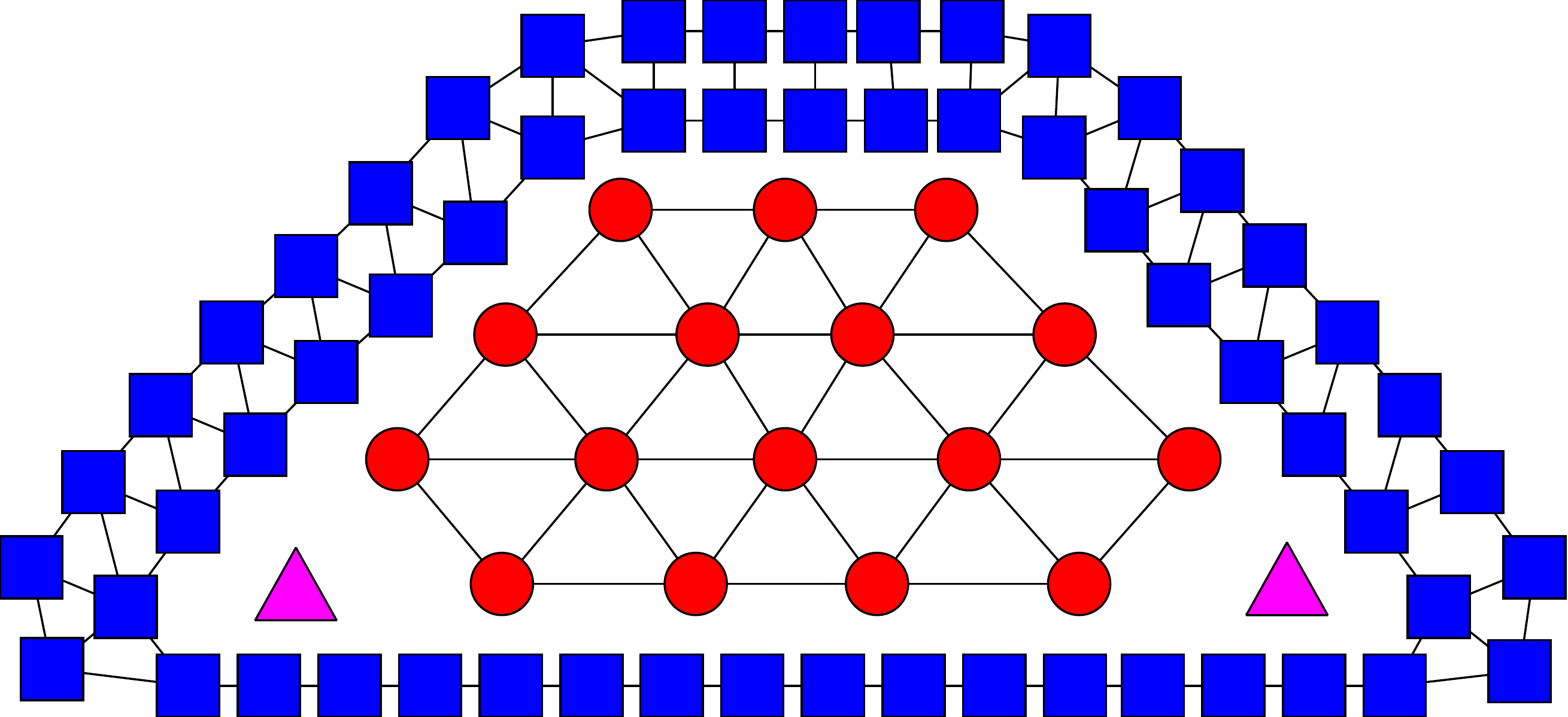}%
    \caption{A simple example of a supervised data classification task where two clear patterns are formed: the highly organized circular-shaped class and a rather dense square-shaped class. The goal is to classify the two triangle-shaped data items.}%
    \label{fig:Motivation-2-Classes}
\end{figure}

Traditional classification techniques often share the same vision: dividing the data space into sub-spaces, each of which representing a class. They are short in reproducing complex-formed or twisted classes. On the other hand, the salient feature of the proposed technique is that it provides two really distinct orthogonal visions: low level and high level visions for data classification. The former views the data's classes from the perspective of their physical features, while the latter captures pattern formations of the data, which, in turn, permits the classifier to reproduce complex-formed and (or) twisted classes, i.e., a test instance can be put into a class if it conforms with the pattern formed by that class no matter how far it is from the center of that class. In this sense, strongly related techniques are co-training \cite{Blum1998} and tri-training \cite{Zhi-Hua2005}, which attempt to consider the cooperation of various
classification techniques (ensemble), each focusing on a theoretically different ``vision'' of the data. However, the involved classification techniques in co-training or tri-training are really only making statistically independent decisions, which still share, in essence, the same data vision. Another difficulty in these techniques resides in building these statistically independent visions. Since it is supposed that the data is generated from an unknown distribution, it is often hard to achieve such task. In our proposed work, the uncorrelated visions are naturally captured by the classifiers themselves, not by changing the content of the data items, but by ``looking'' at the data relationships in a different way (low and high levels).

Following the literature stream on such matter, there are several kinds of works related to high level classification, such as the Semantic Web \cite{Berners-LeeEtAl2001, ShadboltEtAl2006, FeigenbaumEtAl2007}, which uses ontologies to describe the semantics of the data, statistical relational learning, which realizes collective inference \cite{ZhangPD06, Macskassy2007, ZhuYCG07, Gallagher2008, Zhang2008} or graph-based semisupervised learning \cite{Zhu2005, Chapelle2006}, and contextual classification techniques \cite{DonaldsonToussaint1970, BinaghiEtAl2003, TuiaEtAl2003, TianEtAl2006, LuWeng2007, WilliamsEtAl2007, Micheli2009}, which consider the spatial relationships between the individual pixels and the local and global configurations of neighboring pixels in an image for assigning classes.

From the viewpoint of high level classification, all the above mentioned approaches are quite restricted either to the types of semantic features to be extracted, such as Semantic Web, or to the types of data, such as the contextual classification, which is devoted to considering spatial relationship among pixels in image processing. To our knowledge, it is still lacking an explicit and general scheme to deal with high level classification in the literature, which is quite desirable for many applications, such as invariant pattern recognition. The current paper presents an endower to this direction.

As mentioned, low level classification usually presents difficulty in identifying the complex relationships among the data items. Consequently, these techniques are not suitable for uncovering semantically meaningful patterns formed by the data. This is because the data patterns are often not encountered with a fixed shape or distribution, instead, they are frequently determined by the local and/or global interactions among the data items. It is well know that the network representation can capture arbitrary levels of relationships or interactions of the input data. For this reason, we here show how the topological properties of the input data can help in identifying the pattern formation and, consequently, can be used for general high level classification. In this case, the topological properties are revealed by \emph{tourist walks}. A tourist walk can be defined as follows: Given a set of cities, each time the tourist (walker) goes to the nearest city that has not been visited in the past $\mu$ time steps \cite{Lima2001}. It has been shown that tourist walk is useful for data clustering \cite{Monica2006} and image processing \cite{Backes2010}. However, all these kinds of works are realized in regular lattices. Here, we study tourist walk in networks and we show that it has the ability of capturing the topological properties of the underlying network in a local to global fashion. Moreover, the tourist walks approach applied to a networked environment is a relatively new approach taken here. Additionally, its utilization for discovering patterns in a network is a totally novel scheme in the
literature.

In this paper, we propose a technique that combines the low level and the high level supervised data classifications. The idea of this paper is built upon the general framework proposed by \cite{Silva2012HighLevel}. The low level classification can be implemented by any traditional classification technique, while the high level classification exploits the complex topological properties of the underlying network constructed from the input data.
In the original work introduced in \cite{Silva2012HighLevel}, the high level classification problem is treated using three existing network measures in a combined way (assortativity, clustering coefficient, and average degree). Thus, a serious open problem is how one may choose other network measures in an intuitive way and also how to define the inference weight that is given for each of them. In this paper, a novel measure for high level classification is introduced by using tourist walks in networks. Since they are a deterministic dynamical process, the local and global information of the underlying network can be detected by the exclusive use of such measure. Moreover, the use of tourist walks presents some nontrivial advantages over the previous approach, such as:

\begin{itemize}
    \item It is able to capture the organizational and complex features of the class component from a local to global fashion in a natural and intuitive way. For example, when the memory window of the tourist is low, it is able to extract local features of the graph component. As we increase the memory window, the dynamics of the walk compels the tourist to venture far away from its starting vertex. Hence, it is able to capture more global features of the graph component;

    \item It occurs that the tourist walk method presents a critical memory length, where any values larger than this critical point make no change in the transient and cycle lengths of the graph component. This is an interesting phenomenon, which is observed when the memory length reaches a sufficient high value. We say that, when this happen, the walks have reached the ``complexity saturation'' of the class component.  In this occasion, the global topological and organizational features of the network
        are said to be completely characterized in the sense of tourist walks;

    \item In view of the intuitive dynamical properties displayed by a tourist walk, one can avoid the weight assignment among various network measures, which is a problem when static network measures are used, as occurs in \cite{Silva2012HighLevel}. This is because static measures provide partial or instant visions of the underlying network. Thus, one must artificially combine them to get a global vision.

\end{itemize}

Still in this paper, we show how the proposed technique can be used to solve general invariant pattern recognition problems \cite{Hamsici2009, Park2010, Serre2007}, particularly when the pattern variances are nonlinear and there is not a closed form to describe the invariance.

The remainder of the paper is organized as follows. A detailed overview of the tourist walks is supplied in Section \ref{relevant-background}. The proposed model is defined in Section \ref{Model-Description-supervised}. Computer simulations are performed on synthetic and real-world data sets in Section \ref{Computer-Simulations}. In Section \ref{Simulation-Real}, we apply the proposed technique to manual digits recognition problem. Finally, Section \ref{sec:Conclusions} concludes the paper.

\section{Relevant Background: Tourist Walks}
\label{relevant-background}

A tourist walk can be can conceptualized as a walker (tourist) aiming at visiting sites (data items) in a $d$-dimensional map, representing the data set. At each discrete time step, the tourist follows a simple deterministic rule: it visits the nearest site which has not been visited in the previous $\mu$ steps. In other words, the walker performs partially self-avoiding deterministic walks over the data set, where the self-avoiding factor is limited to the memory window $\mu - 1$. This quantity can be understood as a repulsive force emanating from the sites in this memory window, which prevents the walker from visiting them in this interval (refractory time). Therefore, it is prohibited that a trajectory to intersect itself inside this memory window. In spite of being a simple rule, it has been shown that this movement dynamic possesses complex behavior when $\mu > 1$ \cite{Lima2001}.

The tourist's behavior heavily depends on the data set's configuration and the starting site. In computational terms, the tourist's movements are entirely realized by means of a neighborhood table. This table is constructed by ordering all the data items in relation to a specific site. This procedure is performed for every site of the data set.

Each tourist walk can be decomposed in two terms: (i) the initial \emph{transient part} of length $t$ and (ii) a \emph{cycle} (attractor) with period $c$. Figure \ref{img:tourist-walk-schematic} shows an illustration of a tourist walk with $\mu = 1$, i.e., the walker always goes to the nearest neighbor. In this case, one can see that the transient length is $t=3$ and the cycle length $c=6$.

Considering the attractor or cycle period as a walk section that begins and ends at the same site of the data set may lead one to think that, once the tourist visits a specific site, a new visit to it would configure an attractor. However, during a walk, a site may be re-visited without configuring an attractor. For instance, if we had chosen a $\mu = 6$ for the walk in Fig. \ref{img:tourist-walk-schematic}, the re-visit that the tourist performs on the site $4$ would have not configured an attractor, since the site $5$ would still be forbidden site to be visited again; hence, the tourist would be compelled to visit another site. This characteristic enables sophisticated trajectories over the data set, at cost of also increasing the difficulty of detecting an attractor.

A note that is worth pointing out is that, in the majority of the works related to these walks \cite{Lima2001,Stanley2001,Kinouchi2002}, the tourist may visit any other site other than the ones contained in its memory window. As $\mu$ increases, there is a significant chance that the walker will begin performing large jumps in the data set, since the neighborhood is most likely to be already visited in its entirety within the time frame $\mu$. As we will show in this work, in the context of classification, this is an undesirable characteristic that can be simply avoided by using a graph representation of the input data. In this way, the walker is only permitted to visit vertices, represented now by the sites, that are in its connected neighborhood (link). With this modified mechanism, it is most probable that, for large values of $\mu$, depending on the network configuration, the walker will get trapped within a vertex, not being able to further visit other vertices of the neighborhood. In this scenario, we say that the walk only had a transient part and the cycle period is null ($c=0$). Therefore, the tourist walks approach applied to a networked environment is a relatively new approach taken here. Additionally, its utilization for discovering patterns in a network is a totally novel scheme in the literature.

\begin{figure}
  \centering
  \includegraphics[scale=\sizeOfFigure]{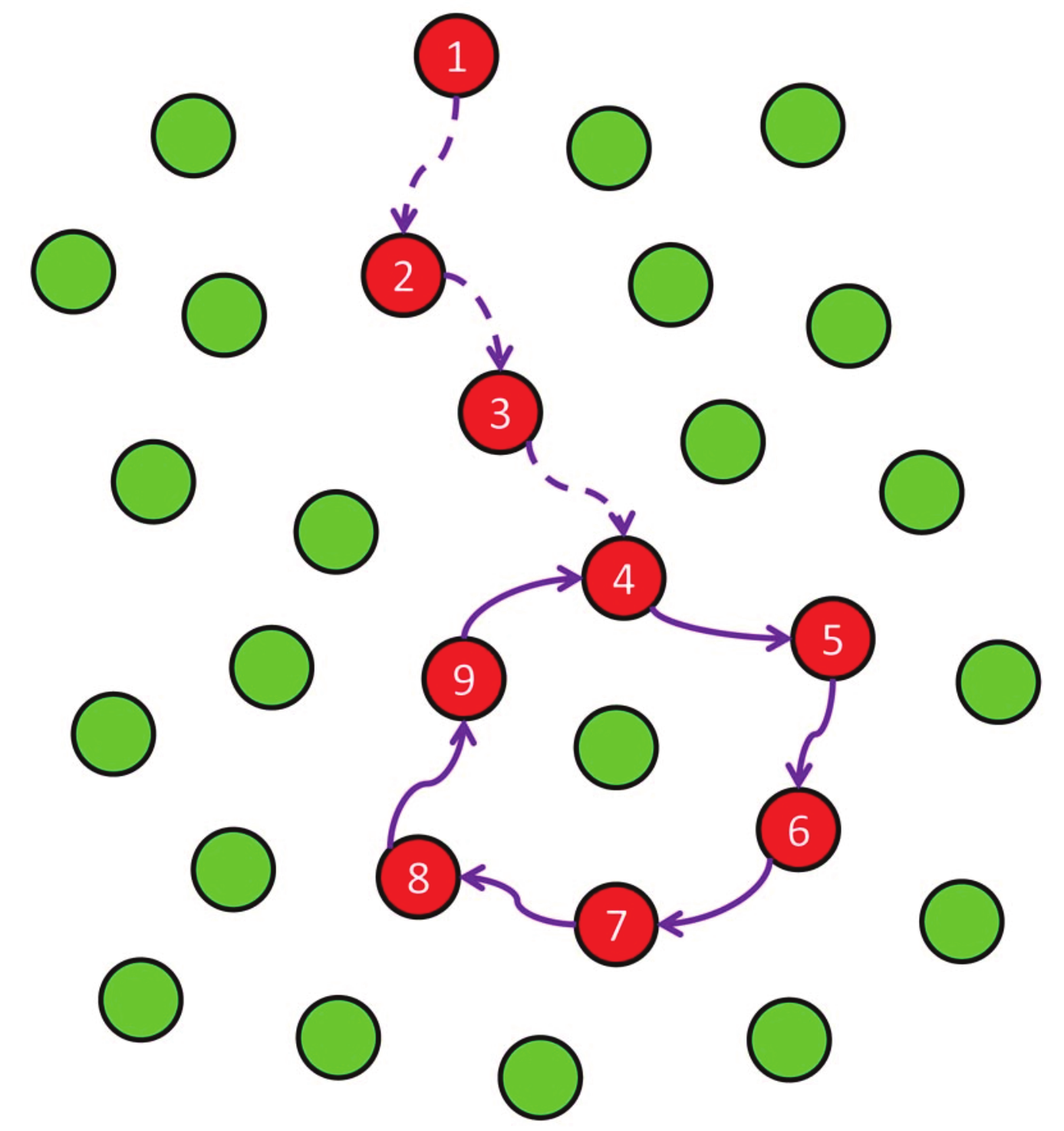}
  \caption{Illustration of a tourist walk with $\mu = 1$. The red and green dots represent visited and unvisited sites, respectively.
  The dashed lines indicate the transient part of the walk, whereas the continuous lines, the attractor of the walk.}
  \label{img:tourist-walk-schematic}
\end{figure}

\section{Model Description}
\label{Model-Description-supervised}

In this section, a useful set of notations is presented, along with the premises of the underlying hybrid classification framework. Next, we give a quick overview on the particularities of the original hybrid classification framework \cite{Silva2012HighLevel}. Finally, the high level classification based on tourist walks is formally introduced, as well as an overview of its algorithm.

\subsection{Notations and Premises}
\label{sec:network-formation-supervised}

The hybrid classifier is designed to work in a \emph{supervised learning} environment. In the following, some mathematical notations and premises are discussed.

Consider that $\mathcal{X}_{\mathrm{training}} = \{(x_{1},y_{1}),\hdots,(x_{l},y_{l})\} \subset \mathcal{X} \times \mathcal{L}$ denotes the training set, which is composed of $l$ labeled training instances. Each training instance, $x_{i} \in \mathcal{X}$, is given a discrete label or target $y_{i} \in \mathcal{L}$. Furthermore, each training instance is described by a $d$-dimensional vector, i.e., $x_{i} = (f_{1},\hdots,f_{d})$, where each entry symbolizes a feature or descriptor of that item.

The goal here is to construct a hypothesis, in a way that the classifier maps $x \mapsto y$. Commonly, the constructed classifier is checked with regard to its prediction power by submitting it to a test set $\mathcal{X}_{\mathrm{test}} = \{x_{l+1},\hdots,x_{l+u}\}$, in which labels are not provided. In this case, each data item is called test instance.
For an unbiased learning, the training and test sets must be disjoint, i.e.,  $\mathcal{X}_{\mathrm{training}} \cap \mathcal{X}_{\mathrm{test}} = \emptyset$.

\subsection{Overview of the Hybrid Classification Framework}

In this section, we review the hybrid classification framework \cite{Silva2012HighLevel}. Specifically, in the following sections, the particularities of the \emph{training and classification phases}, and the general definition of the classification scheme are discussed.

\subsubsection{Training Phase}

In this phase, the data in the training set are mapped into a graph $\mathcal{G}$ using a network formation technique $g: \mathcal{X}_{\mathrm{training}} \mapsto \mathcal{G} = \langle \mathcal{V}, \mathcal{E} \rangle$, where $\mathcal{V} = \{1,\hdots,V\}$ is the set of vertices and $\mathcal{E}$ is the set of edges. Each vertex in $\mathcal{V}$ represents a training instance in $\mathcal{X}_{\mathrm{training}}$. As it will be described later, the pattern formation of the classes will be extracted by using the complex topological features of this networked representation. Therefore, the network construction is vital for the prediction produced by the high level classifier.

In this stage, we first construct a network component for each class of the vector-based training set. The
strategy to create edges depends on the type of region in which each vertex is. When the region is sparse, we utilize a $k$-nearest neighbor ($k$-NN) approach, whereas, when it is dense, we employ the $\epsilon$-radius technique. While the $k$-NN sets up an edge between the $k$ most similar vertices and the reference vertex, the $\epsilon$-radius method creates a link to whichever vertex that is within a predefined distance with radius $\epsilon$. The way that we classify a region as dense or sparse is by checking whether, within a circular region $\epsilon$ centered in the reference vertex, there are more than $k$ vertices of the same class as the reference vertex.
If so, the region is classified as dense and the $\epsilon$-radius is selected; otherwise, the $k$-NN is elected. During the network formation, we are only permitted to create edges between vertices of the same class. The reason why we employ a $k$-NN in sparse region is to prevent the appearance of multiple graph components representing the same class. Therefore, we expect that this process will form isolated class components, in a way that each class is guaranteed to have a single and unique component representing it.

\subsubsection{Classification Phase}

In the classification phase, the unlabeled data items in the $\mathcal{X}_{\mathrm{test}}$ are presented to the classifier one by one. In contrast to the training phase, the class labels of the test instances are unknown. In view of that, we keep utilizing the edge formation strategy previously introduced but with slight changes: now, we do not consider the labels of the neighboring vertices. All the other technicalities remain the same. This prevents test instances from becoming singleton vertices, provided that $k > 0$.

With respect to the high order of learning, once the data item is inserted, each class analyzes, in isolation, the impact of the insertion of this data item on its respective class component by using a number of complex topological features. In the proposed high level model, each class retains an isolated graph component. Each of these components calculates the changes that occur in its pattern formation with the insertion of this test instance. If slight or no changes occur, then it is said that the test instance is in compliance with that class pattern. As a result, the high level classifier yields a great membership value for that test instance on that class. Conversely, if these changes dramatically modify the class pattern, then the high level classifier produces a small membership value on that class. These changes are quantified via network measures, each of which numerically translating the organization of the component from a local to global fashion. As we will see, we will extract information generated from the dynamical process of a tourist walker in a networked environment in an intuitive manner, such as to capture the pattern formation of the network in a local to global basis.

\subsubsection{General Hybrid Classification Framework}

Tthe hybrid classification framework $F$ consists of a convex combination of two orthogonal terms, where each of which renders an uncorrelated vision about the data items, as follows:

\begin{enumerate}
    \item[i.] A low level classifier, for instance, a decision tree, SVM, or a $k$-NN classifier. The vision that it stresses is the physical similarities among the data;

    \item[ii.] A high level classifier, which is responsible for classifying a test instance according to its organizational or semantic meaning with the data. The vision that it values most is the pattern compliance of new test instances with the existing structure built up in the training process.
\end{enumerate}

Mathematically, the membership of the test instance $x_{i} \in \mathcal{X}_{\mathrm{test}}$ with respect to the class $j \in \mathcal{L}$ yielded by the hybrid framework, here written as $F^{(j)}_{i}$, is given by:

\begin{align}
    F^{(j)}_{i} = (1 - \lambda)L^{(j)}_{i} + \lambda H^{(j)}_{i},
    \label{eq:def-classification}
\end{align}

\noindent where $L^{(j)}_{i}, H^{(j)}_{i} \in [0,1]$ represent the membership of the test instance $x_{i}$ towards class $j$ produced by the low and high level classifiers, respectively, and $\lambda \in [0,1]$ is the \emph{compliance term}, which plays the role of counterbalancing the classification decisions supplied by both low and high level classifiers. Note that, when $\rho = 0$, (\ref{eq:def-classification}) reduces to a common low level classifier.

A test instance $x_{i}$ receives the label of the class $j \in \mathcal{L}$ that maximizes (\ref{eq:def-classification}). Mathematically, the estimated label of $x_{i}$, $\hat{y}_{x_{i}}$, is decided according to the following expression:

\begin{align}
    \hat{y}_{x_{i}} = \underset{j \in \mathcal{L}}{\mathrm{arg} \operatorname{\mathrm{max}\mbox{ }}} F^{(j)}_{i}.
    \label{eq:classification-equation}
\end{align}

Note that the predictions produced by both classifiers are combined via a linear combination to derive the prediction of the high level framework (meta-learning). Once the test instance $x_{i}$ gets classified, it is either discarded or incorporated to the training set with the corresponding predicted label. In the second case, only the edges created between the test instance and the class that it belongs to are maintained. Note that, in any of the two situations, each class is still represented by a single graph component.

\subsection{Deriving the High Level Classification Technique Using Tourist Walks}
\label{Classification-Technique}

Equation (\ref{eq:def-classification}) supplies a general framework for the hybrid classification process, in the sense that various supervised data classification techniques can be brought into play. The first term of (\ref{eq:def-classification}) is rather straightforward to implement, since it can be any traditional classification technique. The literature provides a myriad of supervised data classification techniques. Some of these include graph-based methods, decision trees, SVM and its variations, neural networks, Bayesian learning, among many others. However, to our knowledge, little has been done in the area of classifiers that take into account the patterns or organizational features inherently hidden into the relationships among the data items. Thus, we now proceed to a detailed analysis of the proposed high level classification term $H$.

Motivated by the intrinsic ability of describing topological structures among the data items, we propose a network-based (graph-based) technique for the high level classifier $H$. Specifically, the inference of pattern formation within the data is processed using the generated network. In order to do so, the following structural constraints must be satisfied for any constructed network:

\begin{enumerate}
    \item[i.] Each class is an isolated subgraph (graph component);

    \item[ii.] Each class retains a representative and unique graph component.
\end{enumerate}

As we have drawn attention to, the pattern formation of the data is quantified through a combination of network measures generated by a tourist walker in a networked environment. These measures are chosen in a way to cover relevant high level aspects of the class component. One can conceive these dynamical values as true network measures representing each class component.
Having in mind the basic concepts revolving around tourist walks, the decision output of the high level classifier is given by:

\begin{align}
    H^{(j)}_{i} = \frac{\sum_{\mu=0}^{\mu_{c}}{\left[\alpha_{t}(1 - T_{i}^{(j)}(\mu)) + \alpha_{c}(1 - C_{i}^{(j)}(\mu)) \right]} }{\sum_{g\in \mathcal{L}}{\sum_{\mu=0}^{\mu_{c}}{\left[\alpha_{t}(1 - T_{i}^{(g)}(\mu)) + \alpha_{c}(1 - C_{i}^{(g)}(\mu)) \right]}}}
    \label{eq:def-C-term-high-level}
\end{align}

\noindent where $\mu_{c}$ is a critical value that indicates the maximum memory length of the tourist walks,
$\alpha_{t},\alpha_{c} \in [0,1]$ are user-controllable coefficients that indicate the influence of each network measure in the
process of classification, $T_{i}^{(j)}(\mu)$ and $C_{i}^{(j)}(\mu)$ are functions that depend on the transient and cycle
lengths, respectively, of the tourist walk applied to the $i$th data item with regard to the class $j$. These functions are responsible for providing an estimative whether or not the data item $i$ under analysis possesses the same patterns of component
$j$. The denominator in (\ref{eq:def-C-term-high-level}) has been introduced solely for normalization matters. Indeed, in order to
(\ref{eq:def-C-term-high-level}) to be a valid convex combination of network measures, $\alpha_{t}$ and $\alpha_{c}$ must be chosen such
as to satisfy $\alpha_{t} + \alpha_{c} = 1$.

Regarding $T_{i}^{(j)}(\mu)$ and $C_{i}^{(j)}(\mu)$, they are given by the following expressions:

\begin{align}
    \begin{split}
    T_{i}^{(j)}(\mu) = \Delta t^{(j)}_{i}(\mu) p^{(j)}\\
    C_{i}^{(j)}(\mu) = \Delta c^{(j)}_{i}(\mu) p^{(j)}
    \end{split}
    \label{eq:f-function-def-high-level}
\end{align}

\noindent where $\Delta t^{(j)}_{i}(u), \Delta c^{(j)}_{i}(u) \in [0,1]$ are the variations of the transient and cycle lengths that occur on the component representing class $j$ if $i$ joins it and $p^{(j)} \in [0,1]$ is the proportion of data items pertaining to class $j$. Remembering that each class has a component representing itself, the strategy to check the pattern compliance of a test instance is to examine whether its insertion causes a great variation of the network measures representing the class component. In other words, if there is a small change in the network measures, the test instance is in compliance with all the other data items that comprise that class component, i.e., it follows the same pattern as the original members of that class. On the other hand, if its insertion is responsible for a significant variation of the component's network measures, then probably the test instance may not belong to that class. This is exactly the behavior that (\ref{eq:def-C-term-high-level}) together with (\ref{eq:f-function-def-high-level}) propose, since a small variation of $f(u)$ causes a large membership value output by $H$; and vice versa.

In the following, we explain how to compute $\Delta t^{(j)}_{i}(\mu)$ and $\Delta c^{(j)}_{i}(\mu)$ that appear in (\ref{eq:f-function-def-high-level}). Firstly, we need to numerically quantify the transient and cycle lengths of a component. Since the tourist walks are strongly dependent on the starting vertices, for a fixed $\mu$, we perform tourist walks initiating from each one of the vertices that are members of a class component. The transient and cycle lengths of the $j$th component, $\langle t^{(j)}_{i}(\mu) \rangle$ and $\langle c^{(j)}_{i}(\mu) \rangle$, are simply given by the average transient and cycle lengths of all its vertices, respectively. In order to estimate the variation of the component's network measures, consider that $x_{i} \in \mathcal{X}_{\mathrm{test}}$ is a test instance. In relation to an arbitrary class $j$, we virtually insert $x_{i}$ into component $j$ using the network formation technique that we have seen, and recalculate the new average transient and cycle lengths of this component. We denote these new values as $\langle {t'}^{(j)}_{i}(\mu) \rangle$ and $\langle {c'}^{(j)}_{i}(\mu) \rangle$, respectively. This procedure is performed for all classes $j \in \mathcal{L}$. It may occur that some classes $u \in \mathcal{L}$ will not share any connections with the test instance $x_{i}$. Using this approach, $\langle t^{(k)}_{i}(\mu) \rangle = \langle {t'}^{(k)}_{i}(\mu) \rangle$ and $\langle c^{(k)}_{i}(\mu) \rangle = \langle {c'}^{(k)}_{i}(\mu) \rangle$, which is undesirable, since this configuration would state that $x_{i}$ complies perfectly with class $u$. In order to overcome this problem, a simple post-processing is necessary: For all components $u \in \mathcal{L}$ that do not share at least $1$ link with $x_{i}$, we deliberately set $\langle {t'}^{(j)}_{i}(\mu) \rangle$ and $\langle {c'}^{(j)}_{i}(\mu) \rangle$ to a high value. This high value must be greater than the largest variation that occurs in a component which shares a link with the data item under analysis. One may interpret this post-processing as a way to state that $x_{i}$ does not share any pattern formation with class $u$, since it is not even connected to it.

With all this information at hand, we are able to calculate $\Delta t^{(j)}_{i}(\mu)$ and $\Delta c^{(j)}_{i}(\mu),\forall j \in \mathcal{L}$, as follows:

\begin{align}
    \begin{split}
    \Delta t^{(j)}_{i}(\mu) = \frac{|\langle {t'}^{(j)}_{i}(\mu) \rangle - \langle {t}^{(j)}_{i}(\mu) \rangle|}{\sum_{u \in \mathcal{L}}{|\langle {t'}^{(u)}_{i}(\mu) \rangle - \langle {t}^{(u)}_{i}(\mu) \rangle|}}\\
    \Delta c^{(j)}_{i}(\mu) = \frac{|\langle {c'}^{(j)}_{i}(\mu) \rangle - \langle {c}^{(j)}_{i}(\mu) \rangle|}{\sum_{u \in \mathcal{L}}{|\langle {c'}^{(u)}_{i}(\mu) \rangle - \langle {c}^{(u)}_{i}(\mu) \rangle|}}  \end{split}
    \label{eq:deltaG-def-high-level}
\end{align}

\noindent where the denominator is introduced only for normalization matters. According to (\ref{eq:deltaG-def-high-level}), for insertions that result in a considerable variation of the component's transient and cycle lengths, $\Delta t^{(j)}_{i}(\mu)$ and $\Delta c^{(j)}_{i}(\mu)$ will be high. In view of (\ref{eq:f-function-def-high-level}), $T^{(j)}_{i}(\mu)$ and $C^{(j)}_{i}(\mu)$ are expected to be also high, yielding a low membership value predicted by the high level classifier $H^{(j)}_{i}$, as (\ref{eq:def-C-term-high-level}) reveals. On the other hand, for insertions that do not significantly interfere in the pattern formation of the data, $\Delta t^{(j)}_{i}(\mu)$ and $\Delta c^{(j)}_{i}(\mu)$ will be low, and, as a result, $T^{(j)}_{i}(\mu)$ and $C^{(j)}_{i}(\mu)$ are expected to be also low, producing a high membership value for the high level classifier $H^{(j)}_{i}$, as (\ref{eq:def-C-term-high-level}) exposes.

The network-based high level classifier quantifies the variations of the transient and cycle lengths of tourist walks with limited memory $\mu$ that occur in the class components when a test instance artificially joins each of them in isolation. According to (\ref{eq:def-C-term-high-level}), this procedure is performed for several values of the memory length $\mu$, ranging from $0$ (memoryless) to a critical value $\mu_{c}$. This is done in order to capture complex patterns of each of the representative class components in a local to global fashion. When $\mu$ is small, the walks tend to possess a small transient and cycle parts, so that the walker does not wander far away from the starting vertex. In this way, the walking mechanism is responsible for capturing the local structures of the class component. On the other hand, when $\mu$ increases, the walker is compelled to venture deep into the component, possibly very far away from its starting vertex. In this case, the walking process is responsible for capturing the global features of the component. In sum, the fundamental idea of the high level classifier is to make use of a mixture of local and global features of the class components by means of a combination of tourist walks with different values of $\mu$.

Finally, we intuitively explain the role of $p^{(j)} \in [0,1]$ in (\ref{eq:f-function-def-high-level}), i.e., the relative size of each component in the graph. In real-world databases, unbalanced classes are usually encountered. In general, a database frequently encompasses several classes of different sizes. A great portion of the network measures is very sensitive to the size of the components. In an attempt to soften this problem, we introduce in (\ref{eq:f-function-def-high-level}) the term $p^{(j)}$, which is the proportion of vertices that class $j$ has. Mathematically, it is given by:

\begin{align}
    p^{(j)} = \frac{1}{V} \sum_{u=1}^{V}{\mathds{1}_{\{y_{u} = j\}}},
    \label{eq:proportion-def}
\end{align}

\noindent where $V$ is the number of vertices and $\mathds{1}_{\{.\}}$ is the indicator function that yields $1$ if the argument is logically true, or $0$, otherwise. In view of the introduction of this mechanism, we expect to obviate the effects of unbalanced classes in the classification process.

\subsection{Algorithm}

For didactic purposes, Algorithm \ref{algorithm-high-level} enumerates the sequence of steps needed to perform a high level classification of a single test instance, according to the rules that we have described in this section.

\begin{algorithm}

 \caption{An overview of the high level classification procedure based on tourist walks for a single test instance.}
 \label{algorithm-high-level}
\end{algorithm}

\begin{enumerate}
    \item Construct a set of network components $\mathcal{G} = \{\mathcal{G}_{1}, \hdots, \mathcal{G}_{L}\}$, for each class
 from the vector-based training set by using the combined $k$-NN and $\epsilon$ rules from the training phase;

    \item Calculate the average transient and cycle lengths for each network's component, representing a class of training data;

    \item Insert a test instance $x_i$ into the formed graph by using the $k$-NN and $\epsilon$ rules from the classification phase;

    \item Calculate the new average transient and cycle lengths for each network's component that received at least one link from the test instance;

    \item Calculate the transient and cycle length variations by using Eq. (\ref{eq:f-function-def-high-level});

    \item Produce the high level decision value by using Eq. (\ref{eq:def-C-term-high-level}).
\end{enumerate}

\subsection{Linking the High Level Classifier based on Tourist Walks and the General Framework introduced in \cite{Silva2012HighLevel}}

One may wonder how the high level classifier based on tourist walks given by (\ref{eq:def-C-term-high-level}) is plugged into the general framework for high level classification introduced in \cite{Silva2012HighLevel}. The following proposition links both approaches.

\begin{proposition}

    The high level classifier based on tourist walks, whose decision equations are:

    \begin{align}
    H^{(j)}_{i} = \frac{\sum_{\mu=0}^{\mu_{c}}{\left[\alpha_{t}(1 - T_{i}^{(j)}(\mu)) + \alpha_{c}(1 - C_{i}^{(j)}(\mu)) \right]} }{\sum_{g\in \mathcal{L}}{\sum_{\mu=0}^{\mu_{c}}{\left[\alpha_{t}(1 - T_{i}^{(g)}(\mu)) + \alpha_{c}(1 - C_{i}^{(g)}(\mu)) \right]}}},
    \label{eq:def-C-term-high-level-prop}
    \end{align}
    \begin{align}
    \begin{split}
    T_{i}^{(j)}(\mu) = \Delta t^{(j)}_{i}(\mu) p^{(j)}\\
    C_{i}^{(j)}(\mu) = \Delta c^{(j)}_{i}(\mu) p^{(j)},
    \end{split}
    \label{eq:f-function-def-high-level-prop}
    \end{align}

    \noindent is a particular form of the generic high level framework given in \cite[Eqs. (5) and (7)]{Silva2012HighLevel}:

    \begin{align}
        H^{(j)}_{i} = \frac{\sum_{u = 1}^{m}{\alpha(u)\left[1 - f^{(j)}_{i}(u) \right]}}{\sum_{g \in \mathcal{L}}{\sum_{u = 1}^{m}{\alpha(u)\left[1 - f^{(g)}_{i}(u) \right]}}},
        \label{eq:def-C-term-prop}
    \end{align}
    \begin{align}
        f^{(j)}_{i}(u) = \Delta G^{(j)}_{i}(u) p^{(j)}.
        \label{eq:f-function-def-prop}
    \end{align}

\end{proposition}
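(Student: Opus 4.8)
The plan is to exhibit an explicit \emph{re-indexing} that collapses the double sum over the memory length $\mu$ and the measure type (transient versus cycle) in (\ref{eq:def-C-term-high-level-prop}) into the single sum over abstract network measures $u \in \{1,\dots,m\}$ appearing in the generic framework (\ref{eq:def-C-term-prop}). Conceptually, the point is that the tourist walk does not furnish a single network measure but a whole indexed \emph{family} of them: for every admissible memory value $\mu$ it produces a transient-length measure and a cycle-length measure. Treating each pair (memory value, measure type) as one of the generic measures $u$ is what realizes (\ref{eq:def-C-term-high-level-prop})--(\ref{eq:f-function-def-high-level-prop}) as a special case of (\ref{eq:def-C-term-prop})--(\ref{eq:f-function-def-prop}).

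Concretely, I would set $m = 2(\mu_{c} + 1)$ and fix a bijection $\phi$ between $u \in \{1,\dots,m\}$ and the set of pairs $\{(\mu,\mathrm{t}),(\mu,\mathrm{c}) : 0 \le \mu \le \mu_{c}\}$; for instance, let the odd indices $u$ enumerate the transient slots $(\mu,\mathrm{t})$ and the even indices the cycle slots $(\mu,\mathrm{c})$. For an index $u$ with $\phi(u) = (\mu,\mathrm{t})$ I would define $\alpha(u) = \alpha_{t}$ and $\Delta G^{(j)}_{i}(u) = \Delta t^{(j)}_{i}(\mu)$, and for $\phi(u) = (\mu,\mathrm{c})$ I would define $\alpha(u) = \alpha_{c}$ and $\Delta G^{(j)}_{i}(u) = \Delta c^{(j)}_{i}(\mu)$. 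With these assignments the generic local fit (\ref{eq:f-function-def-prop}), namely $f^{(j)}_{i}(u) = \Delta G^{(j)}_{i}(u)\, p^{(j)}$, reproduces exactly $T^{(j)}_{i}(\mu)$ on transient slots and $C^{(j)}_{i}(\mu)$ on cycle slots, by the definitions in (\ref{eq:f-function-def-high-level-prop}).

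It then remains to verify equality term by term. I would expand the numerator of (\ref{eq:def-C-term-high-level-prop}) by linearity, writing $\sum_{\mu=0}^{\mu_{c}}[\alpha_{t}(1 - T^{(j)}_{i}(\mu)) + \alpha_{c}(1 - C^{(j)}_{i}(\mu))] = \sum_{\mu=0}^{\mu_{c}}\alpha_{t}(1 - T^{(j)}_{i}(\mu)) + \sum_{\mu=0}^{\mu_{c}}\alpha_{c}(1 - C^{(j)}_{i}(\mu))$, and observe that under $\phi$ this is precisely $\sum_{u=1}^{m}\alpha(u)[1 - f^{(j)}_{i}(u)]$, the numerator of (\ref{eq:def-C-term-prop}); the identical argument applies to the denominator with $g$ ranging over $\mathcal{L}$. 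Since the membership $H^{(j)}_{i}$ is a ratio of numerator to denominator built from the \emph{same} weights, any global positive rescaling of the $\alpha(u)$ cancels, so the construction is compatible with whichever normalization convention (e.g.\ $\sum_{u} \alpha(u) = 1$) the framework of \cite{Silva2012HighLevel} adopts; one simply rescales $\alpha_{t},\alpha_{c}$ by the common factor $1/(\mu_{c}+1)$ if such a convention is required.

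The argument is essentially bookkeeping, so the only genuine care is conceptual rather than computational: one must be explicit that the generic weight sequence $\alpha(\cdot)$ is permitted to take \emph{two} distinct values according to the parity of $u$ (that is, according to the measure type), and that the generic measure family $\Delta G^{(j)}_{i}(\cdot)$ is rich enough to encode a $\mu$-indexed dynamical quantity. Confirming that these freedoms are indeed available in the formulation of \cite{Silva2012HighLevel}, together with checking that the range conditions $\alpha(u)\in[0,1]$ and $f^{(j)}_{i}(u)\in[0,1]$ are inherited from $\alpha_{t},\alpha_{c}\in[0,1]$ and $\Delta t^{(j)}_{i}(\mu),\Delta c^{(j)}_{i}(\mu),p^{(j)}\in[0,1]$, is the main (and essentially only) point requiring attention.
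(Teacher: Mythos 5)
Your proposal is correct and follows essentially the same route as the paper's own proof: both arguments are pure re-indexing, mapping the $2(\mu_{c}+1)$ pairs (memory length, measure type) bijectively onto the generic indices $u \in \{1,\dots,m\}$ with $m = 2\mu_{c}+2$, setting $\alpha(u) \in \{\alpha_{t},\alpha_{c}\}$ and $\Delta G^{(j)}_{i}(u) \in \{\Delta t^{(j)}_{i}(\mu), \Delta c^{(j)}_{i}(\mu)\}$ accordingly, and whether the bijection interleaves by parity (yours) or concatenates the transient block before the cycle block (the paper's piecewise $v(\mu)$ and $V^{(j)}_{i}(\mu)$) is immaterial. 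The one point where you go beyond the paper is the normalization: the paper imposes the framework's condition $\sum_{u}\alpha(u)=1$ and derives $(\mu_{c}+1)(\alpha_{t}+\alpha_{c})=1$, which sits in tension with its own earlier convention $\alpha_{t}+\alpha_{c}=1$ unless $\mu_{c}=0$, whereas your observation that $H^{(j)}_{i}$ is a ratio invariant under a common positive rescaling of the weights---so that $\alpha_{t}$ and $\alpha_{c}$ may simply be divided by $\mu_{c}+1$ to meet the convention---resolves this cleanly.
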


\begin{proof}

    First, one can see that (\ref{eq:def-C-term-high-level-prop}) can be written as:

    \begin{align}
    H^{(j)}_{i} = \frac{\sum_{\mu=1}^{2\mu_{c} + 2}{v(\mu)(1 - V_{i}^{(j)}(\mu))}}{\sum_{g\in \mathcal{L}}{\sum_{\mu=1}^{2\mu_{c} + 2}{v(\mu)(1 - V_{i}^{(g)}(\mu))}}},
    \label{eq:midway-point-prop}
    \end{align}

    \noindent where:

    \begin{align}
         v(\mu) = \left\{
                  \begin{array}{l l}
                    \alpha_{t} & \quad \text{if $\mu \le \mu_{c} + 1$}\\
                    \alpha_{c} & \quad \text{if $\mu > \mu_{c} + 1$}\\
                  \end{array} \right.,
    \end{align}

    \noindent and

    \begin{align}
         V_{i}^{(j)}(\mu) = \left\{
                  \begin{array}{l l}
                    T_{i}^{(j)}(\mu) & \quad \text{if $\mu \le \mu_{c} + 1$}\\
                    C_{i}^{(j)}(\mu) & \quad \text{if $\mu > \mu_{c} + 1$}\\
                  \end{array} \right.,
    \end{align}

    \noindent are piecewise functions defined over the domain $\{1, \hdots, 2\mu_{c} + 2\}$. In view of \cite[Eq. (6)]{Silva2012HighLevel}, one must have that:

     \begin{align}
         \sum_{\mu=1}^{2\mu_{c} + 2}{v(\mu)} = 1 \Rightarrow (\mu_{c} + 1)(\alpha_{t} + \alpha_{c}) = 1.
         \label{eq:midway-2}
     \end{align}

     Moreover, by equivalence, comparing the upper limits of the summations of (\ref{eq:def-C-term-prop}) and (\ref{eq:midway-point-prop}), one has that:

     \begin{align}
         m =  2\mu_{c} + 2.
     \end{align}

     Using \cite[Eq. (6)]{Silva2012HighLevel} on (\ref{eq:midway-2}), one has:

     \begin{align}
         (\mu_{c} + 1)(\alpha_{t} + \alpha_{c}) = \sum_{\mu=1}^{2\mu_{c} + 2}{v(\mu)} = 1 = \sum_{u=1}^{m}{\alpha(u)} = \sum_{u=1}^{2\mu_{c} + 2}{\alpha(u)}.
     \end{align}

     Therefore, by polynomial equivalence, one can take:

     \begin{align}
        \alpha(u) = v(u) = \left\{
                  \begin{array}{l l}
                    \alpha_{t} & \quad \text{if $u \le \mu_{c} + 1$}\\
                    \alpha_{c} & \quad \text{if $u > \mu_{c} + 1$}\\
                  \end{array} \right..
     \end{align}

     Using the same reasoning, we have that:

     \begin{align}
        f^{(j)}_{i}(u) = V_{i}^{(j)}(u) = \left\{
                  \begin{array}{l l}
                    T_{i}^{(j)}(u) & \quad \text{if $u \le \mu_{c} + 1$}\\
                    C_{i}^{(j)}(u) & \quad \text{if $u > \mu_{c} + 1$}\\
        \end{array} \right..
     \end{align}

     Finally, by looking at (\ref{eq:f-function-def-high-level-prop}) and (\ref{eq:f-function-def-prop}), we can infer that the tourist walk based classifier can be coupled into the hybrid framework by taking $2\mu_{c} + 2$ network measures, such that the first $\mu_{c} + 1$ are the transient lengths with increasing memory lengths, each of which weighted by $\alpha_{t}$, and the remaining $\mu_{c} + 1$ are cycle lengths with increasing memory lengths, each of which weighted by $\alpha_{c}$. In view of this, the high level classifier based on tourist walks is, in fact, a particular implementation of the generic high level classifier.
\end{proof}

\section{Computer Simulations}
\label{Computer-Simulations}

In this section, we present computer simulation results in order to assess the effectiveness of the proposed hybrid classification model based on tourist walks. The error estimation method in these simulations is set to be the stratified $10$-fold cross-validation.

\subsection{Motivating and Illustrative Examples}
\label{sec:how-it-works}

In this section, we provide simple examples with the goal of showing the mechanics of the proposed hybrid classification technique.  For this end, we simplify the parameters selection procedure, as follows: the weights given for the transient and cycle lengths are the same, i.e., $\alpha_{t} = \alpha_{c} = 0.5$; and the critical tourist walk length $\mu_{c}$ is set as being the size of the smallest class in the problem. Here, we design particular situations in which the use of the high order of learning is welcomed, since the reliance on mere physical measures would probably deceive a low level classifier.

As an introductory example, consider the synthetic data set supplied in Fig. \ref{fig:Motivation-2-Classes} in the Introduction section. Our goal is to classify the two triangle-shaped data items ($\mathbf{X}_{\mathrm{test}}$). The items in $\mathbf{X}_{\mathrm{test}}$ are inserted one by one using $\epsilon=0.05$ and $k = 2$ to construct the network components before and after each insertion.  For the purposes of this simulation, once a test instance is classified, it is discarded. With respect to the low level classifier, a fuzzy SVM classifier is utilized \cite{Lin2002}, which is equipped with optimization method criterion defined as the Karush-Kuhn-Tucker violation fixed at $10^{-3}$ (the same condition suggested by \cite{Hsu2002}). The following four kernels are used: (i) Linear: $u \cdot v$, (ii) RBF: $\mathrm{exp}(-\gamma \parallel u - v \parallel^{2})$, (iii) Sigmoid: $\mathrm{tanh}(\gamma u \cdot v + c)$, and (iv) Polynomial: $(c + u \cdot v)^{d}$. Additionally, the cost parameter $C$ must be set in all simulations. For each low level classifier, we apply a fine-tuning parameter phase in order to choose a model with optimized classification results.

The networked data set exhibited in Fig. \ref{fig:Motivation-2-Classes} has $2$ classes, a red or circular-shaped class ($16$ vertices) and a blue or square-shaped class ($58$ vertices). Consequently, the proportions of vertices pertaining to each class are: $p^{(\mathrm{red})} = 21.64\%$ and $p^{(\mathrm{blue})} = 78.36\%$. Clearly, while the circular-shaped (red) class carries a perceivable pattern, which, geometrically speaking, is a lozenge-shaped lattice disposed in a $2$-dimensional space, the blue (``square") class does not indicate a strong pattern as the first class. In this particular situation, we will see that the results produced by the SVM by itself are not satisfactory. Even though the kernel is responsible for spawning the data items into a higher space via a nonlinear transformation, the test instances in question may only be distinguished by using semantic knowledge, i.e., the topological features of the classes. It is worth stressing that special kernels could be created in order to solve specific data distributions, but they would be totally peculiar for the problems at hand, making their usability constrained to a very few real situations. On the other side, the proposed high level technique captures pattern formation of the data in a general manner.

\begin{table*}[htbp]
  \centering
  \footnotesize
  \caption{Results obtained for the classification of the triangle-shaped test instances exhibited in Fig. \ref{fig:Motivation-2-Classes}.}
    \begin{tabular}{rrcrcccccccc}
    \addlinespace
    \toprule
           & \multicolumn{1}{c}{\textbf{Kernel}} & \multicolumn{4}{c}{\textbf{Optimized Parameters}} & \multicolumn{2}{c}{\textbf{$\lambda = 0$}} & \multicolumn{2}{c}{\textbf{$\lambda = 0.5$}} & \multicolumn{2}{c}{\textbf{$\lambda = 0.9$}} \\
           &        & \textbf{C} & \textbf{$\gamma$} & \textbf{c} & \textbf{d} & \textbf{Red Class} & \textbf{Blue Class} & \textbf{Red Class} & \textbf{Blue Class} & \textbf{Red Class} & \textbf{Blue Class} \\
    \midrule
    \multicolumn{1}{c}{\multirow{4}[0]{*}{\begin{sideways}\textbf{Left-most}\end{sideways}}} & \textbf{Linear} & $2^{1}$      & \multicolumn{1}{c}{N/A} & N/A    & N/A    & 0.40   & \textbf{0.60} & \textbf{0.54} & 0.46   & \textbf{0.64} & 0.36 \\
    \multicolumn{1}{c}{} & \textbf{RBF} & $2^{0}$      & \multicolumn{1}{c}{$2^{-6}$} & N/A    & N/A    & 0.47 & \textbf{0.53}   & \textbf{0.60} & 0.40   & \textbf{0.67} & 0.33 \\
    \multicolumn{1}{c}{} & \textbf{Sigmoid} & $2^{0}$      & \multicolumn{1}{c}{$2^{-1}$} & $-4.5$  & N/A    & 0.46 & \textbf{0.54}   & \textbf{0.59} & 0.41   & \textbf{0.67} & 0.33 \\
    \multicolumn{1}{c}{} & \textbf{Polynomial} & $2^{1}$      & \multicolumn{1}{c}{$2^{-4}$} & $3.0$  & $4$ & 0.44 & \textbf{0.56}   & \textbf{0.57} & 0.44   & \textbf{0.65} & 0.35 \\
    \midrule
    \multicolumn{1}{c}{\multirow{4}[0]{*}{\begin{sideways}\textbf{Right-most}\end{sideways}}} & \textbf{Linear} & $2^{-1}$      & \multicolumn{1}{c}{N/A} & N/A    & N/A    & 0.24 & \textbf{0.76}   & 0.40 & \textbf{0.60}   & \textbf{0.51} & 0.49 \\
    \multicolumn{1}{c}{} & \textbf{RBF} & $2^{-1}$      & \multicolumn{1}{c}{$2^{-5}$} & N/A    & N/A    & 0.29 & \textbf{0.71}   & 0.45 & \textbf{0.55}   & \textbf{0.57} & 0.43 \\
    \multicolumn{1}{c}{} & \textbf{Sigmoid} & $2^{-1}$      & \multicolumn{1}{c}{$2^{-3}$} & $-6.5$  & N/A    & 0.31 & \textbf{0.69}   & 0.47 & \textbf{0.53}   & \textbf{0.58} & 0.42 \\
    \multicolumn{1}{c}{} & \textbf{Polynomial} & $2^{-1}$      & \multicolumn{1}{c}{$2^{-5}$} & $5.0$  & $2$ & 0.28 & \textbf{0.72}   & 0.44 & \textbf{0.56}   & \textbf{0.57} & 0.43 \\
    \bottomrule
    \end{tabular}%
  \label{tab:grid-lattice}%
\end{table*}%

Table \ref{tab:grid-lattice} reports the classification results of the two existing test instances for all the $4$ kernels. For each kernel, the final prediction is given for three different values of the compliance term $\lambda$: $\lambda = 0$ (pure SVM), $\lambda = 0.5$ (equal weight for the decision of the SVM and the high level classifier), and $\lambda = 0.9$ (a high weight for the high level classifier and a low weight for the SVM). We can clearly see that a pure SVM with well-known kernels is not able to correctly classify the data items shown in Fig. \ref{fig:Motivation-2-Classes}. However, if we make our final decision based on a mixture of the SVM and the high level classifier, one can verify that correct results can be obtained, i.e., the two triangle-shaped data items are classified to red or circular-shaped class.

\begin{figure} [htb]
    \centering
    \includegraphics[scale = 0.25]{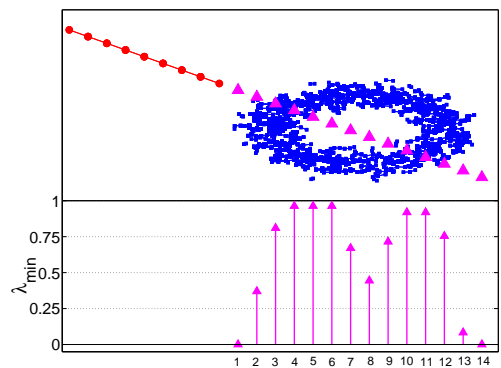}
    \caption{Minimum value of the compliance term, $\lambda_{\mathrm{\mathrm{min}}}$, that results in the correct classification
    of the missing test instances. Traditional techniques would definitely fail in correctly classifying the straight line that diametrally     crosses the densely connected component pertaining to the blue or ``square" class.}%
    \label{fig:Devious-Straight-Line-LambdaAnalysis}
\end{figure}

Now, consider the classification problem arranged in Fig. \ref{fig:Devious-Straight-Line-LambdaAnalysis}. Here, we are going to empirically calculate the minimum required compliance term $\lambda_{\mathrm{min}}$ for which the data items from the test set are classified as members of the red or circular-shaped class. In the figure, one can see that there is a segment of line representing the red or circular-shaped class ($9$ vertices) and also a condensed rectangular class outlined by the blue or square-shaped class ($1000$ vertices). The network formation in the training and test phases uses $k=1$ and $\epsilon = 0.07$ (this radius covers, for any vertex in the straight line, $2$ adjacent vertices, except for the vertices in each end). The fuzzy SVM technique with RBF kernel ($C = 2^{2}$ and $\gamma = 2^{-1}$) is employed as the traditional low level classifier.
The task is to classify the $14$ test instances depicted by the big triangle-shape items from left to right. After a test instance is classified, it is incorporated to the training set with the corresponding predicted label.  The graphic embedded in Fig. \ref{fig:Devious-Straight-Line-LambdaAnalysis} shows the minimum required value of $\lambda_{\mathrm{min}}$ for which the the triangle-shaped items are classified as members of the red or circular-shaped class. This graphic is constructed according to the triangle-shaped element that is exactly at the same position with respect to the $x$-axis in the scatter plot drawn above. For example, the first triangle-shaped data item can be correctly classified if one chooses $\lambda \ge \lambda_{\mathrm{min}} \approx 0.37$. The second and third data items would require at least $\lambda_{\mathrm{min}} = 0.81$ and $\lambda_{\mathrm{min}} = 0.96$, respectively, and so on. Specifically, as the straight line crosses the condensed region of the blue or square-shaped class, the compliance term approaches $\lambda \rightarrow 1$, since one cannot establish its decision based on the low level classifier, because it would erroneously decide favorable to the blue or square-shaped class.

\subsection{Parameter Sensitivity Analysis}
\label{sec:parameter-sensitivity}

In this section, we will perform several simulations with the goal of better understanding the
influence of each parameter in the model.

\subsubsection{Influence of the Compliance Term for Different High Level Classifiers}

In this section, we study the influence of the compliance term $\lambda$ for three different types of high level classifiers, as follows: (i) one constructed solely using the component's cycle length; (ii) one built exclusively using the component's transient length; and (iii) the best weighted combination of these two measures. The optimization process is conducted by finding $\alpha_{t} \times \alpha_{c} \in \{0, 0.1,\hdots, 1\} \times \{0, 0.1,\hdots, 1\}$ (search space), subjected to $\alpha_{t} + \alpha_{c} = 1$, which results in the highest accuracy rate of the model. Here, we show that, in general, the transient and cycles lengths are not correlated. The objective of this section is to make this point clear and demonstrate that, when they act together, they are able to produce better accuracy rates. In order to conduct these tests, classes with Gaussian distributions are used. The network in the classification phase is constructed with $k=1$ and  $\epsilon=0.05$. The same $\epsilon$ is employed in the classification phase. The similarity measure is simply given by the reciprocal of the Euclidean measure. For the traditional classifier, the fuzzy SVM with RBF kernel is employed with $C=2^{4}$ and $\gamma = 2^{-2}$. Finally, this process is repeated $100$ times and the mean and the corresponding standard deviation for each value of $\lambda$ is reported.

Now, we advance to our first experiment in which we consider a rather simple classification scenario, where the classes are completely separated from each other, as Fig. \ref{fig:Gauss-Separated} exhibits. Figure \ref{fig:Gauss-Separated-ClassificationRate} depicts the accuracy rate of the proposed classifier vs. $\lambda$ for the three types of high level classifiers discussed before. The best weighted combination of the cycle and transient lengths is $\alpha_{c} = 0.6$ and $\alpha_{t} = 0.4$. One can see that the high level classifier constructed by the combination of the transient and cycle lengths is able to outperform the other high level classifiers constructed using only one of these measures. In addition, when $\lambda = 0$ (only the usage of the traditional classifier), almost no wrong labels are assigned. On the other hand, as $\lambda$ increases, the accuracy rate of the proposed technique starts to monotonically decrease. This is predictable, since the two classes are similar; hence, the network measures associated to each representative component are almost equivalent. Arguments such that there is no mixture between the classes and the spatial correlations of both classes are similar reinforce this phenomenon. Therefore, it is natural that the high level classifier will become confused in classifying these data items under such conditions. This example shows that only the high level classifier is insufficient to get good classification results.

\begin{figure} [!Htb]
    \centering
    \subfloat[]
    {\includegraphics[scale = 0.16]{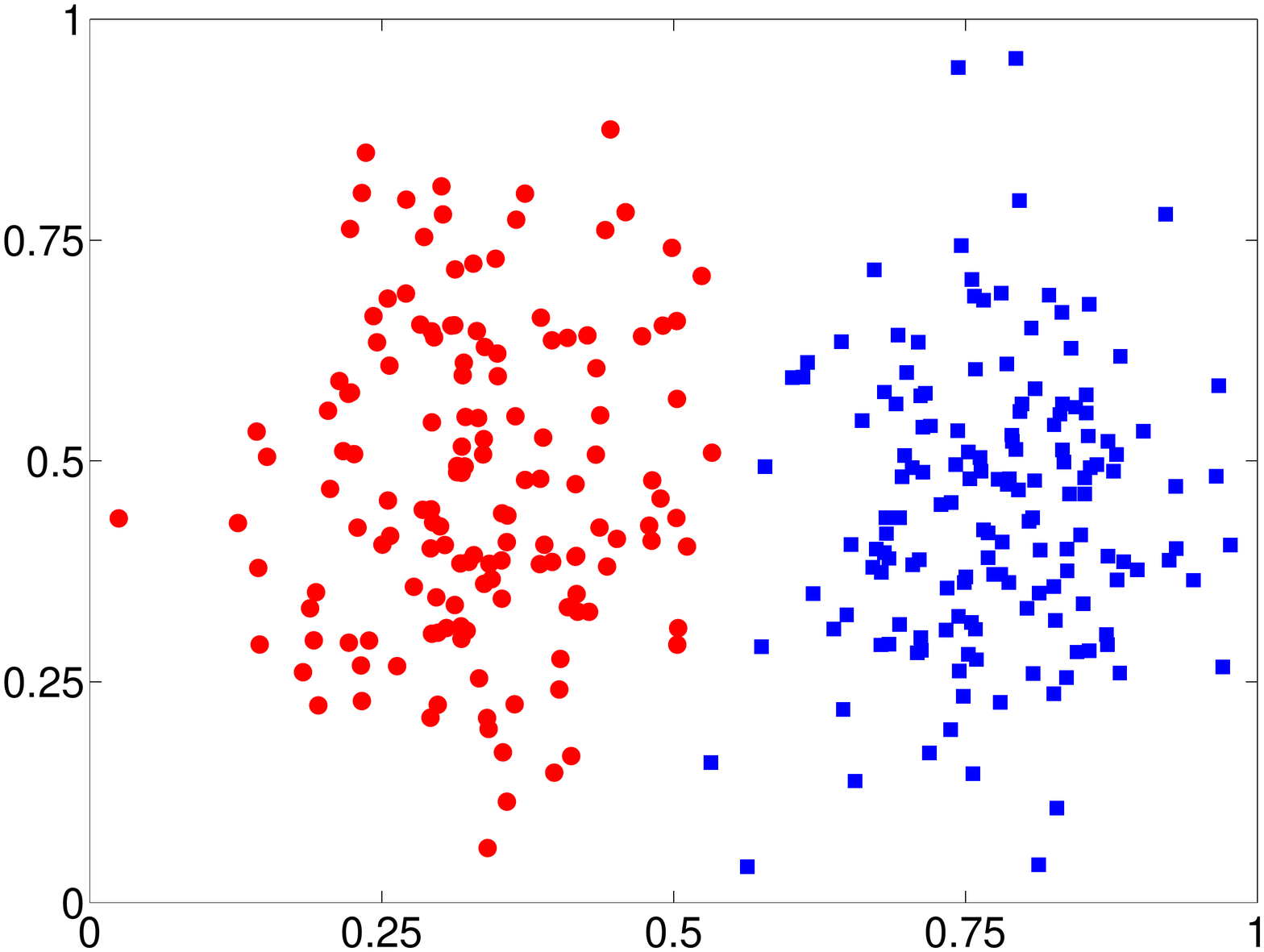}\label{fig:Gauss-Separated}}%
    \subfloat[]
    {\includegraphics[scale = 0.16]{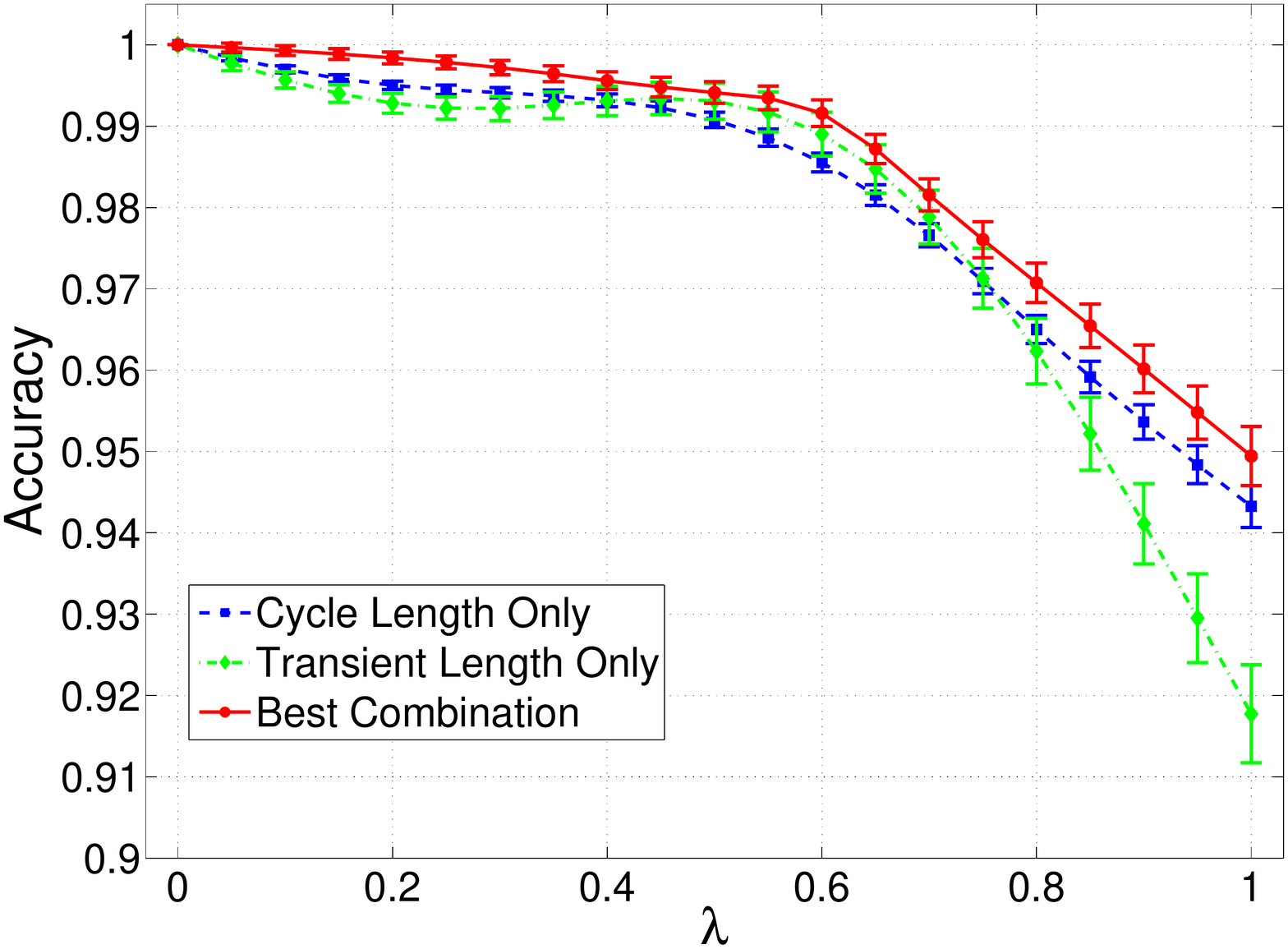}\label{fig:Gauss-Separated-ClassificationRate}}\\%

    \subfloat[]
    {\includegraphics[scale = 0.16]{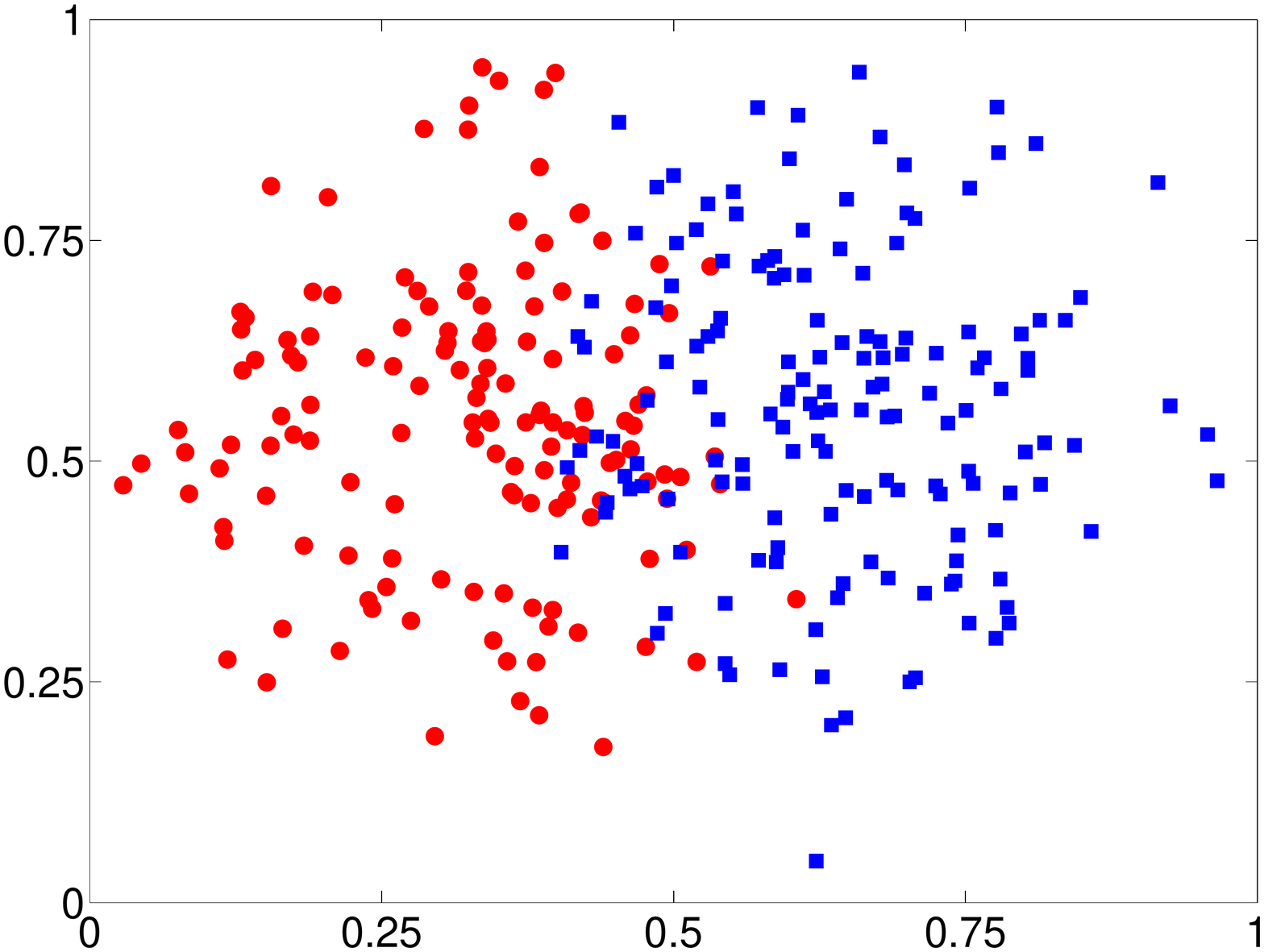}\label{fig:Gauss-Slightly-Mixed}}%
    \subfloat[]
    {\includegraphics[scale = 0.16]{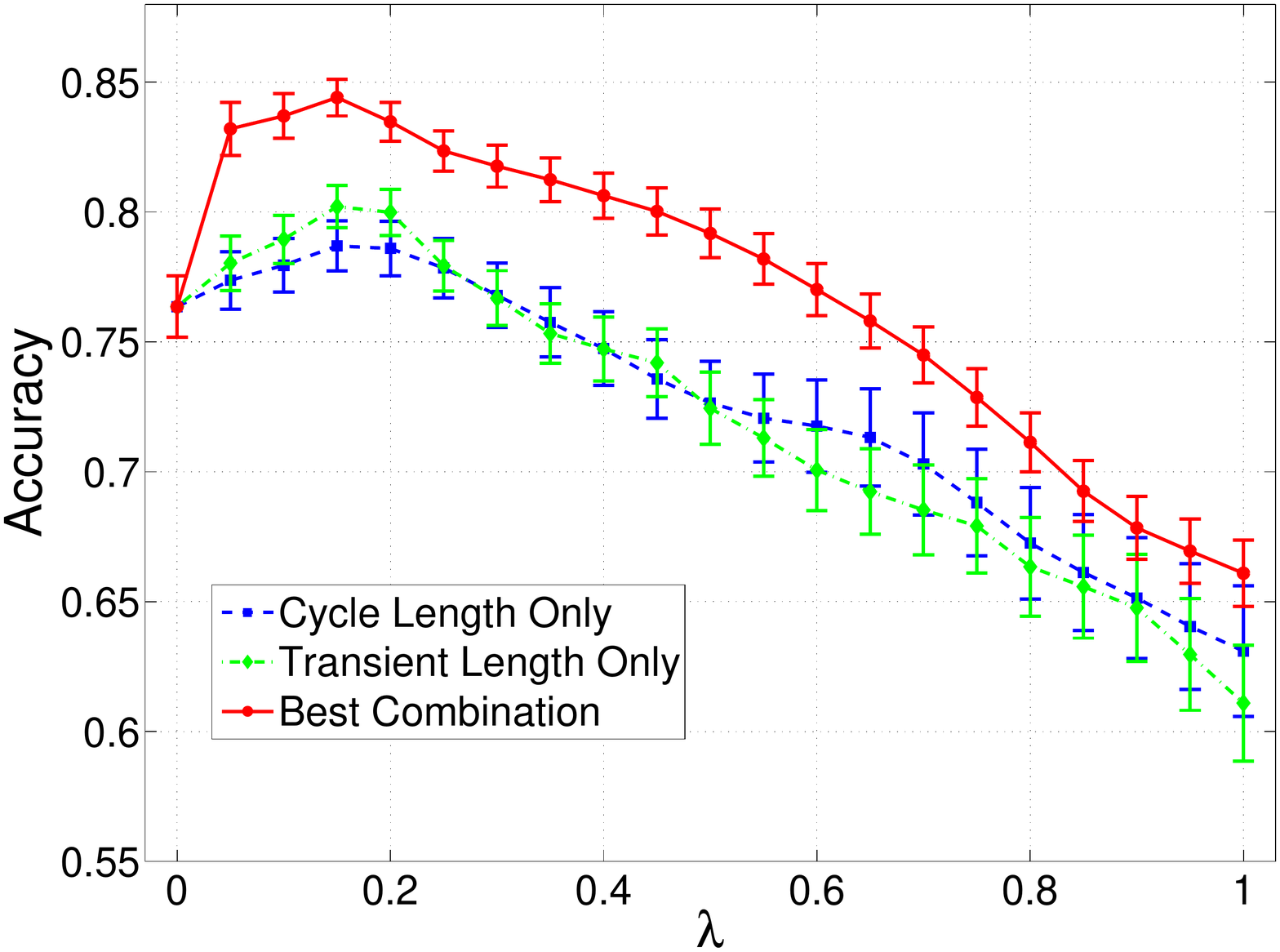}\label{fig:Gauss-Slightly-Mixed-ClassificationRate}}%

    \subfloat[]
    {\includegraphics[scale = 0.16]{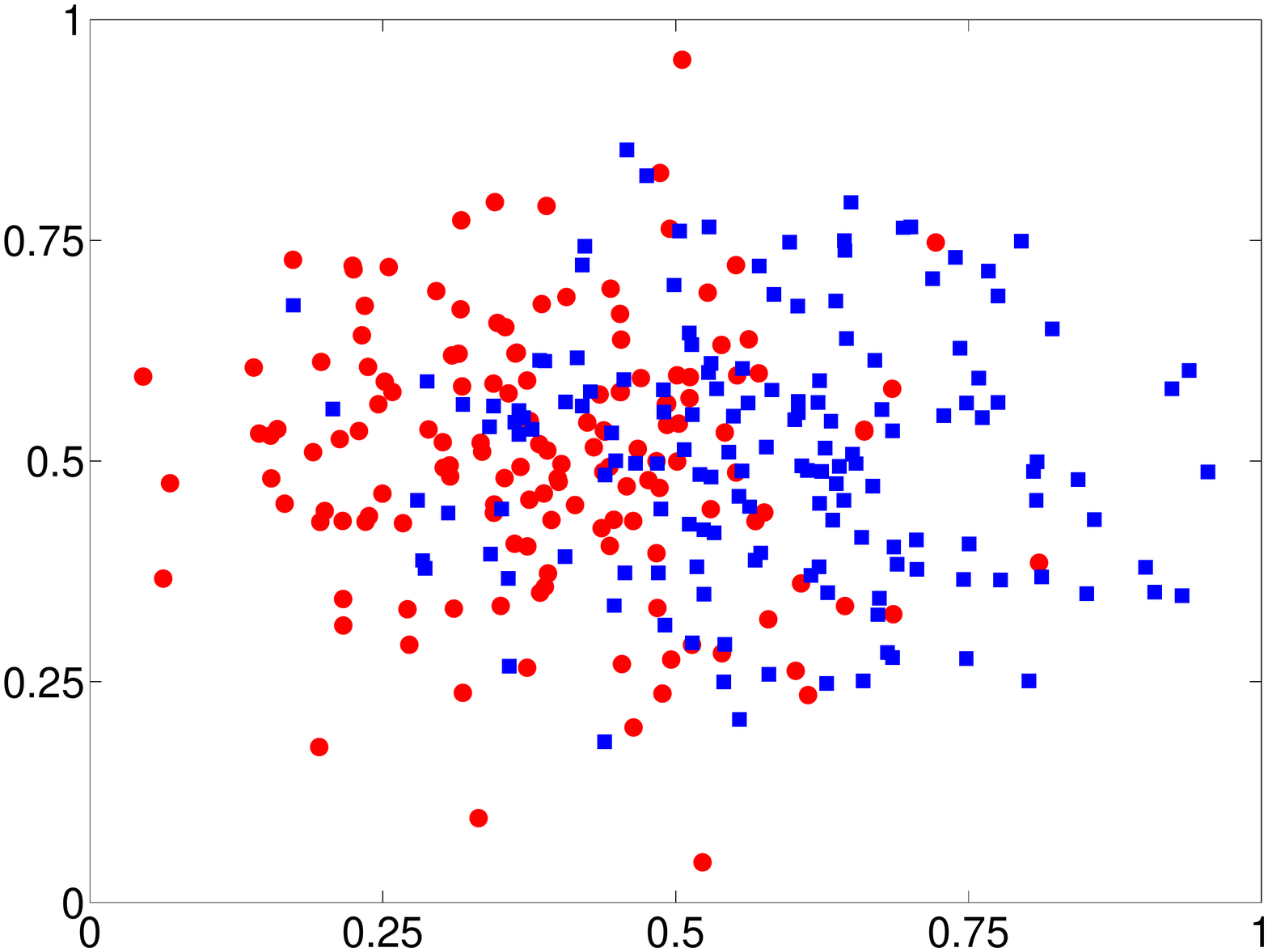}\label{fig:Gauss-Heavily-Mixed}}%
    \subfloat[]
    {\includegraphics[scale = 0.16]{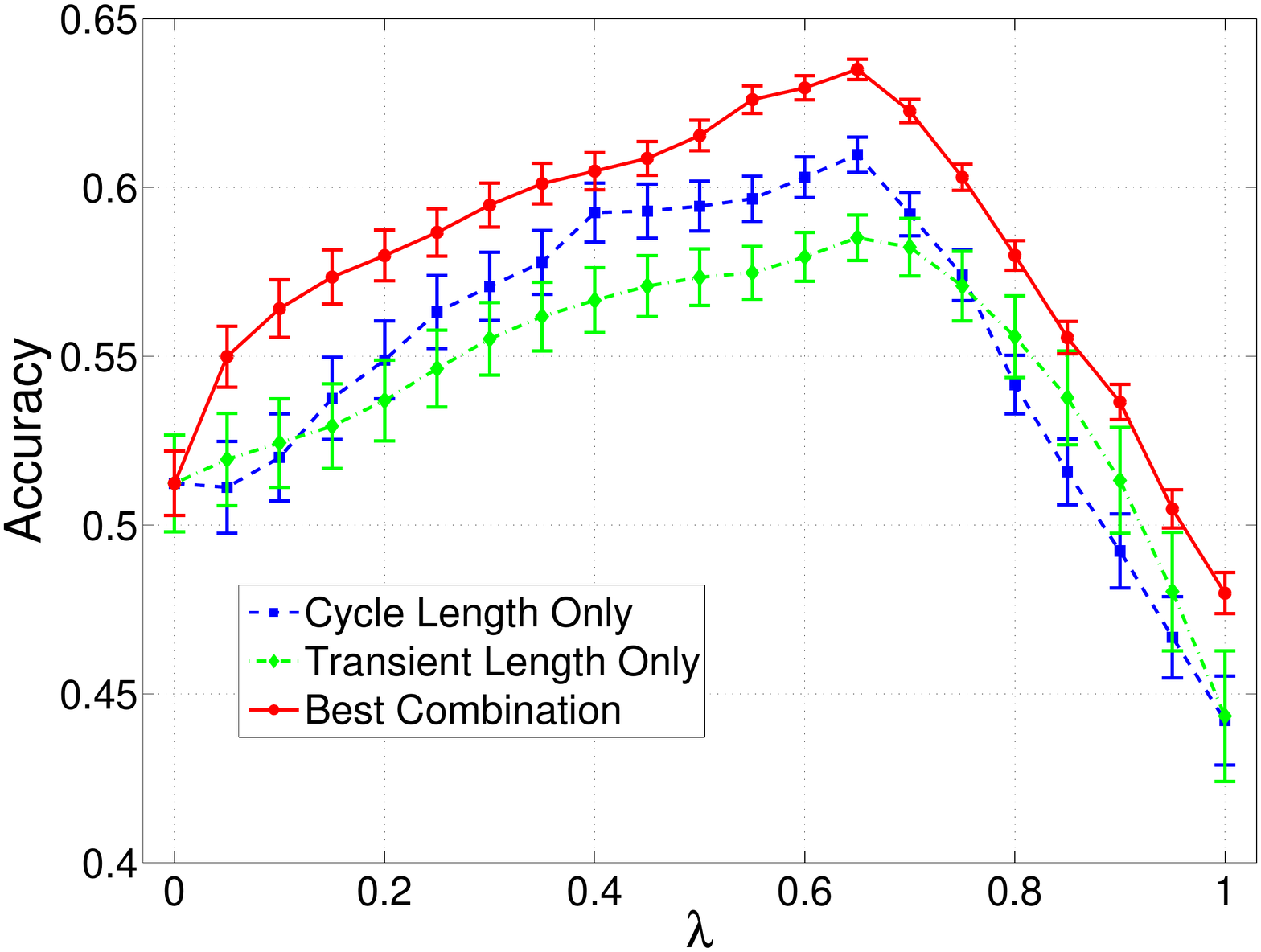}\label{fig:Gauss-Heavily-Mixed-ClassificationRate}}%

    \caption{Analysis of accuracy rate vs. the compliance term with three distinct high level classifiers. (a) Two integrally separated classes. (b) Impact of $\lambda$ on the network of Fig. \ref{fig:Artificial-data}a. (c) Two classes slightly mixed. (d) Impact of $\lambda$ on the network of Fig. \ref{fig:Artificial-data}c. (e) Two classes heavily mixed. (f) Impact of $\lambda$ on the network of Fig. \ref{fig:Artificial-data}e.
    }%
    \label{fig:Artificial-data}
\end{figure}

Next, let us examine the problem of classification illustrated in Fig. \ref{fig:Gauss-Slightly-Mixed}, where the two classes slightly collide with each other. As a result of this phenomenon, a conflicting region is constituted. The data items which reside in this region are most likely to be misclassified by a pure traditional classifier. Similarly to the previous case, Fig. \ref{fig:Gauss-Slightly-Mixed-ClassificationRate} which displays the accuracy rate of the model. In this case, the best weighted combination of transient and cycle lengths is $\alpha_{t} = 0.7$ and $\alpha_{c} = 0.3$. One can see that a mixture of low level and high level classifiers does supply a boost in the accuracy rate of the model. Specifically, the highest accuracy rate occurs when $\lambda \in \{0.15, 0.2\}$. The region of coalescence of both classes influences in the network construction, in such a way that the representative components of each class become slightly different. These arguments explain that a little relevance to the high level classifier decision can cause the accuracy rate to increase, since the network measures describing each component are slightly distinct.

We now proceed to inspect the classification problem proposed in Fig. \ref{fig:Gauss-Heavily-Mixed}. In this particular situation, the two classes heavily collide with each other. Due to that, the two classes become almost indistinguishable. In consequence of the network formation technique, two very distinct and representative components will arise from these two classes. In this special scenario, the two components that comprise the network are expected to be possess different network properties, i.e., some unique structural pattern for each class is hoped for emerging.  Figure \ref{fig:Gauss-Heavily-Mixed-ClassificationRate} depicts the accuracy rate reached by the classifier against various values of $\lambda$. The best weighted combination here is $\alpha_{t} = 0.2$ and $\alpha_{c} = 0.8$. One can see that the accuracy rate keeps on increasing until a high value of the compliance term, namely $\lambda = 0.65$, where it achieves $65\%$, against $54\%$ with $\lambda = 0$ for the high level classifier combining the transition and cycle lengths. This phenomenon is precisely explained by the complex structural patterns that each component exhibits, in which merely traditional classifiers would not be able to capture; therefore, a heavy weight on the high level classifier decision was responsible for this significant increase in the accuracy rate.

\subsubsection{Influence of the Critical Memory Length}

As we have seen, the high level classifier makes its prediction by using combinations of several tourist walks with memory lengths $\{0,1,\hdots,\mu_{c}\}$. A natural question that arises is: is it really necessary to perform the computation of tourist walks with $\mu$ ranging from $0$ to a maximum feasible number, i.e., the number of vertices in a component? In this section, we will empirically show that, usually, it is not necessary to perform all these computations. For this end, we will reinforce this argument by verifying this behavior in a synthetic data set and two real-world data sets.

We start out by revisiting the synthetic data set given in Fig. \ref{fig:Motivation-2-Classes} shown in the Introduction section. The behavior of the transient and cycle lengths are displayed in Figs. \ref{fig:Artificial-transient-length} and \ref{fig:Artificial-cycle-length}, respectively. One can note that these two dynamical measures vary as $\mu$ increases up to a point where they reach a steady region in which no more oscillating is verified. When this happens, we say that the dynamics of the tourist walk have reached a \emph{saturation point}, in the sense that further computations of tourist walks with larger memory lengths are redundant. Moreover, one can see that this saturation point is reached very quickly in relation to the graph components.

\begin{figure} [!htb]
    \centering
    \subfloat[]
    {\includegraphics[scale = 0.16]{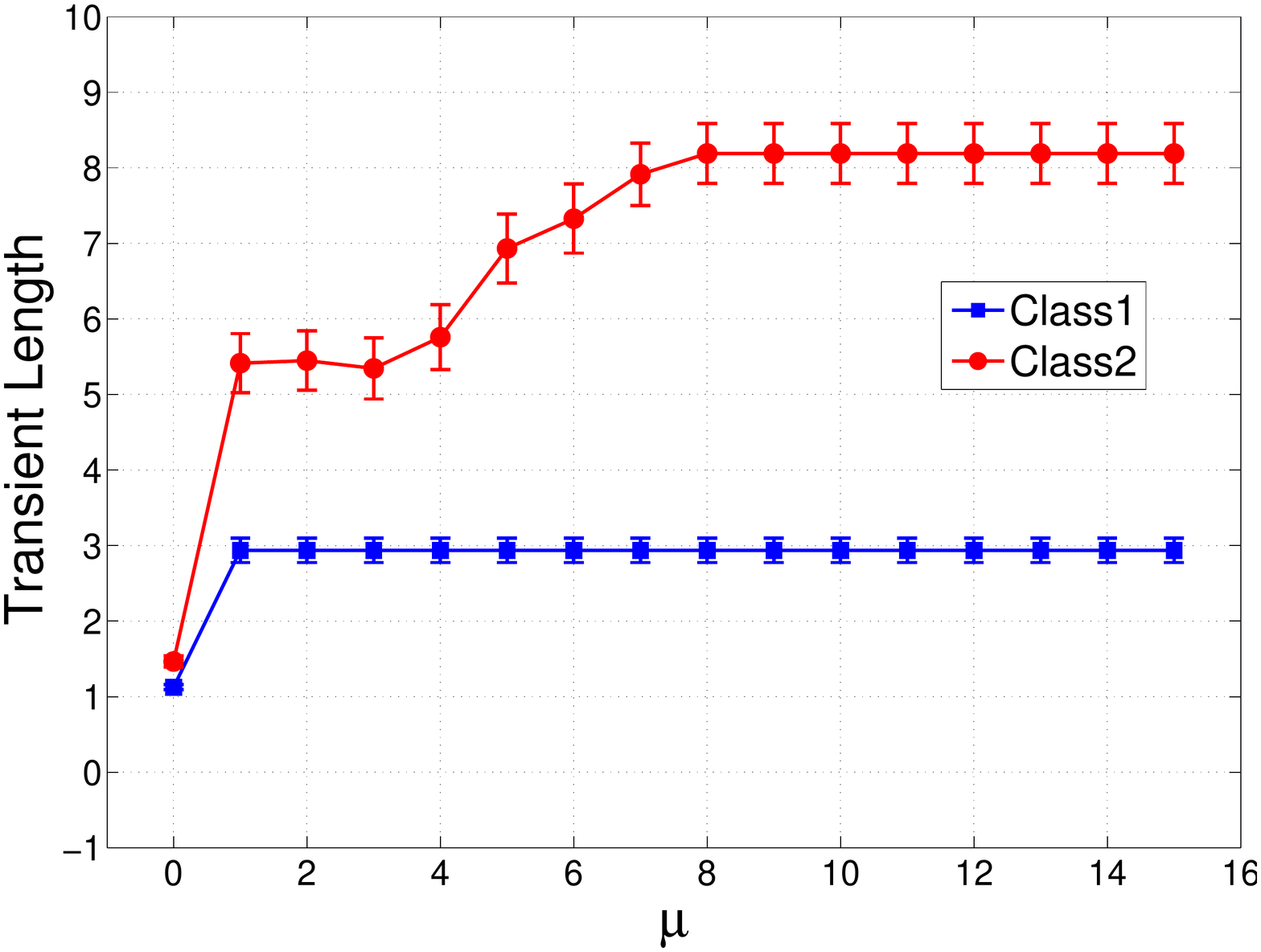}\label{fig:Artificial-transient-length}}%
    \subfloat[]
    {\includegraphics[scale = 0.16]{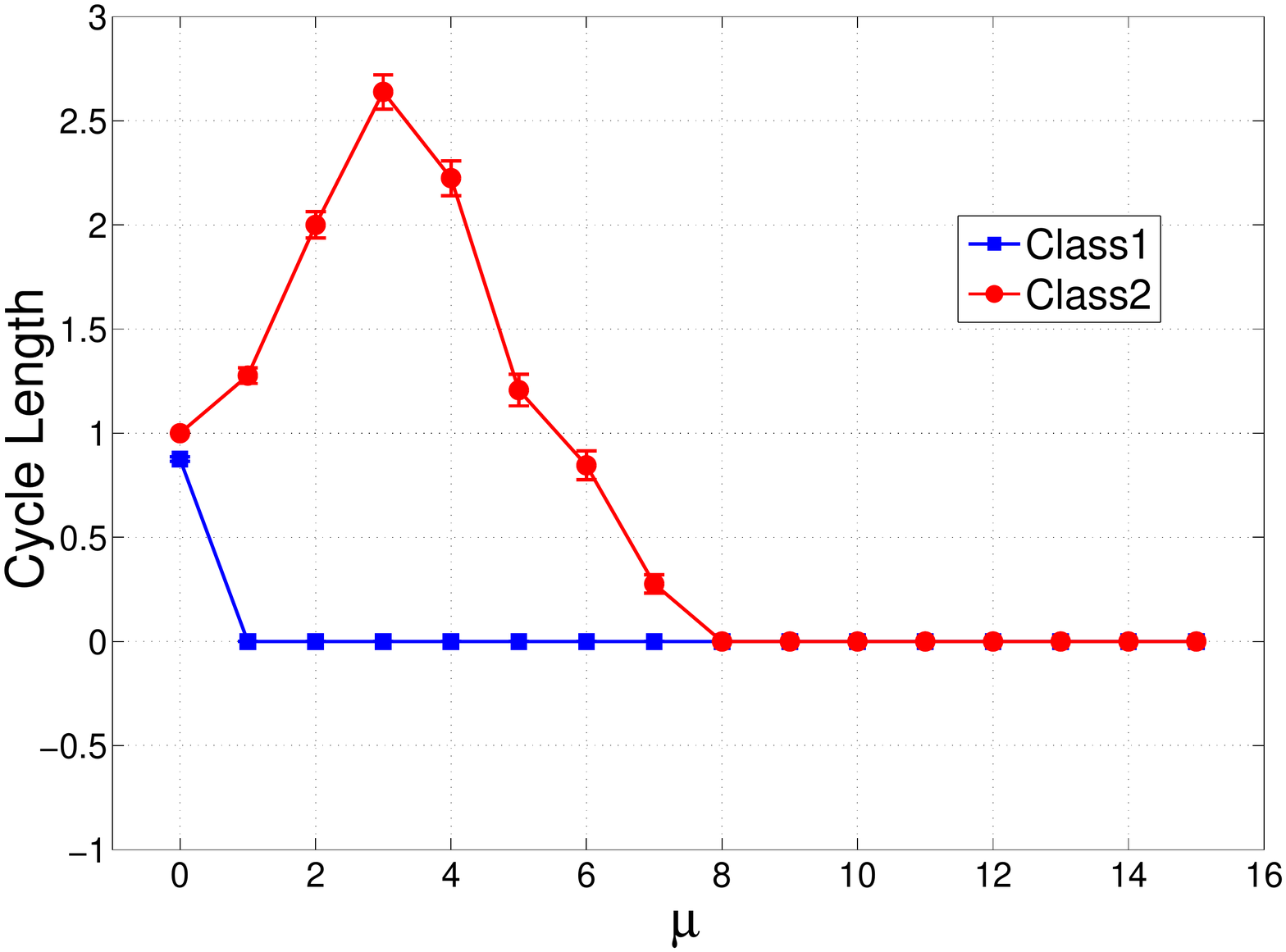}\label{fig:Artificial-cycle-length}}%

    \caption{Behavior of the transient and cycle lengths of the synthetic data set displayed in Fig. \ref{fig:Motivation-2-Classes}.}%
    \label{fig:artificial-transient-cycle}
\end{figure}

\begin{figure} [!htb]
    \centering
    \subfloat[]
    {\includegraphics[scale = 0.16]{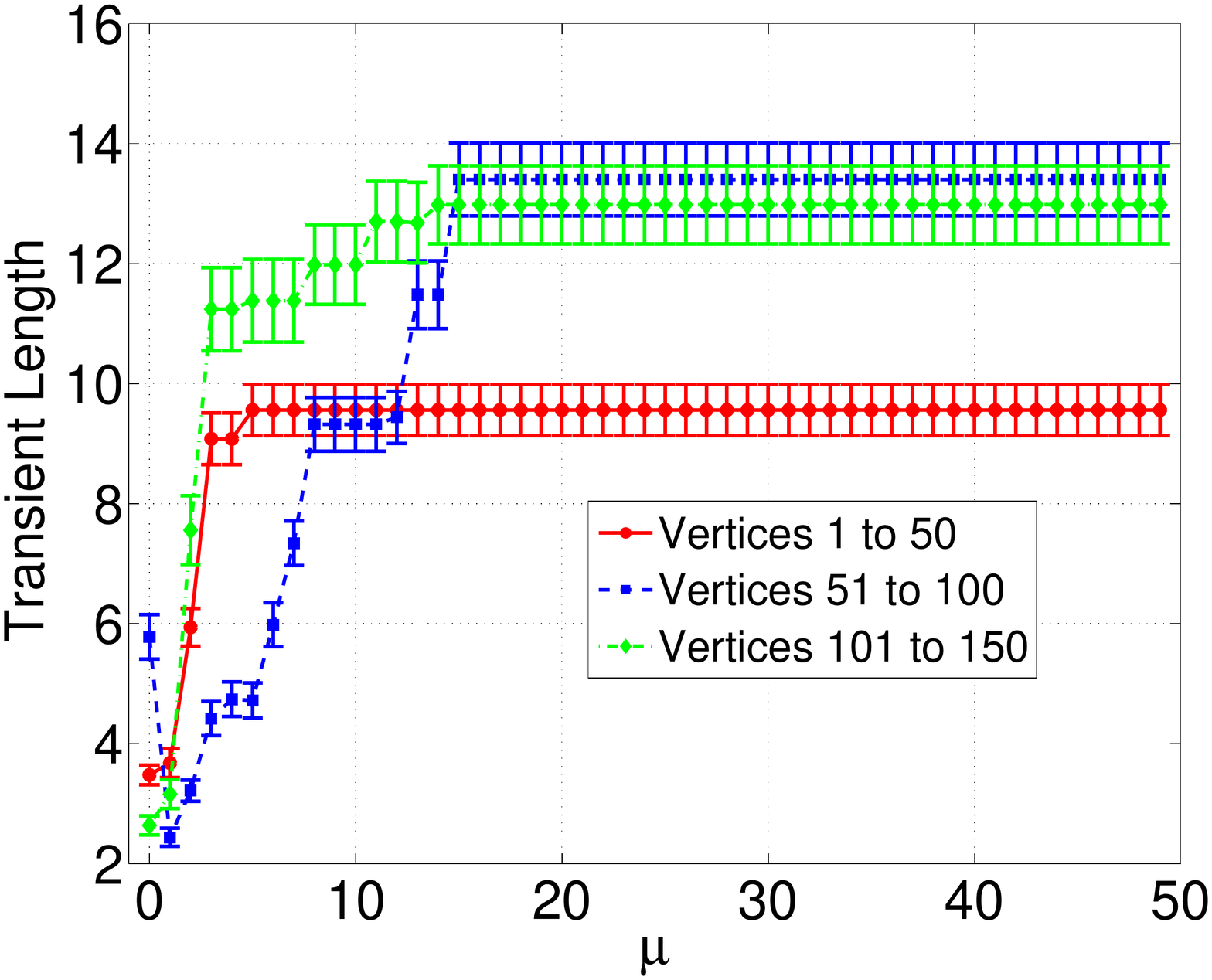}\label{fig:Iris-transient-length}}%
    \subfloat[]
    {\includegraphics[scale = 0.16]{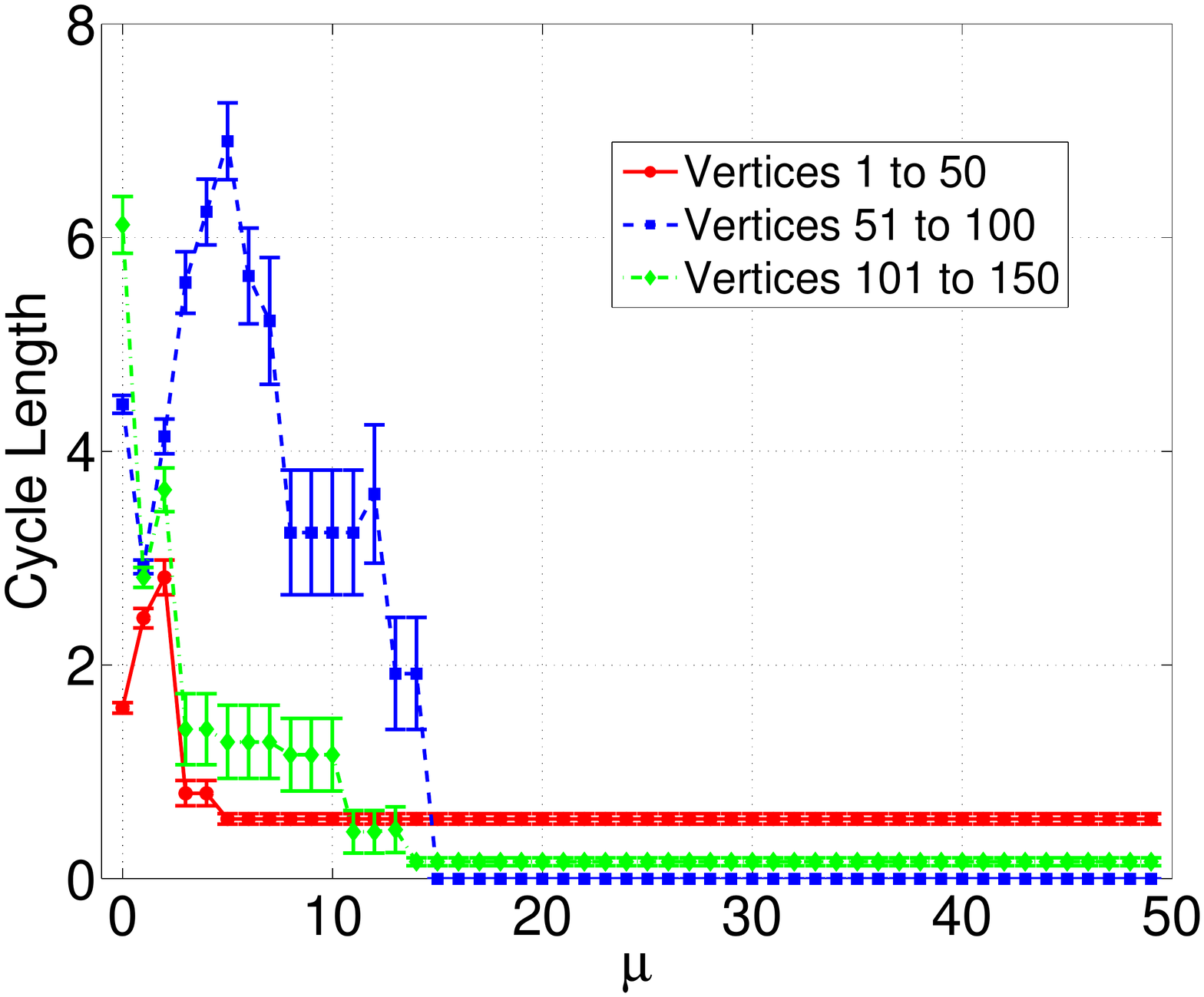}\label{fig:Iris-cycle-length}}%

    \caption{Behavior of the transient and cycle lengths of the Iris data set. Network's parameters: $k = 1$ and $\epsilon = 0.03$. (a) Transient lengths vs. $\mu_{c}$. (b) Cycle lengths vs. $\mu_{c}$.
    }%
    \label{fig:iris-transient-cycle}
\end{figure}

\begin{figure} [!htb]
    \centering
    \subfloat[]
    {\includegraphics[scale = 0.16]{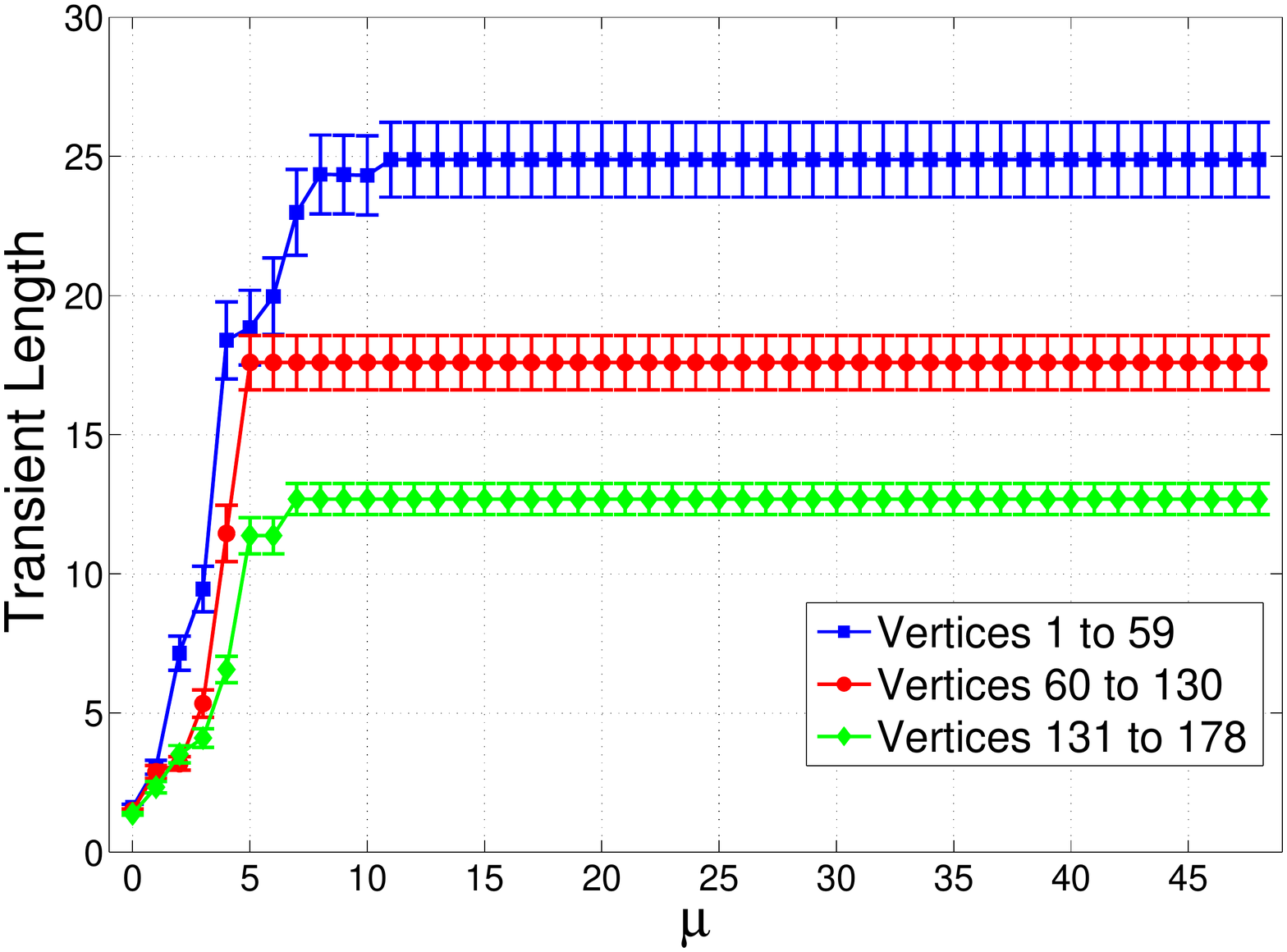}\label{fig:Wine-transient-length}}%
    \subfloat[]
    {\includegraphics[scale = 0.16]{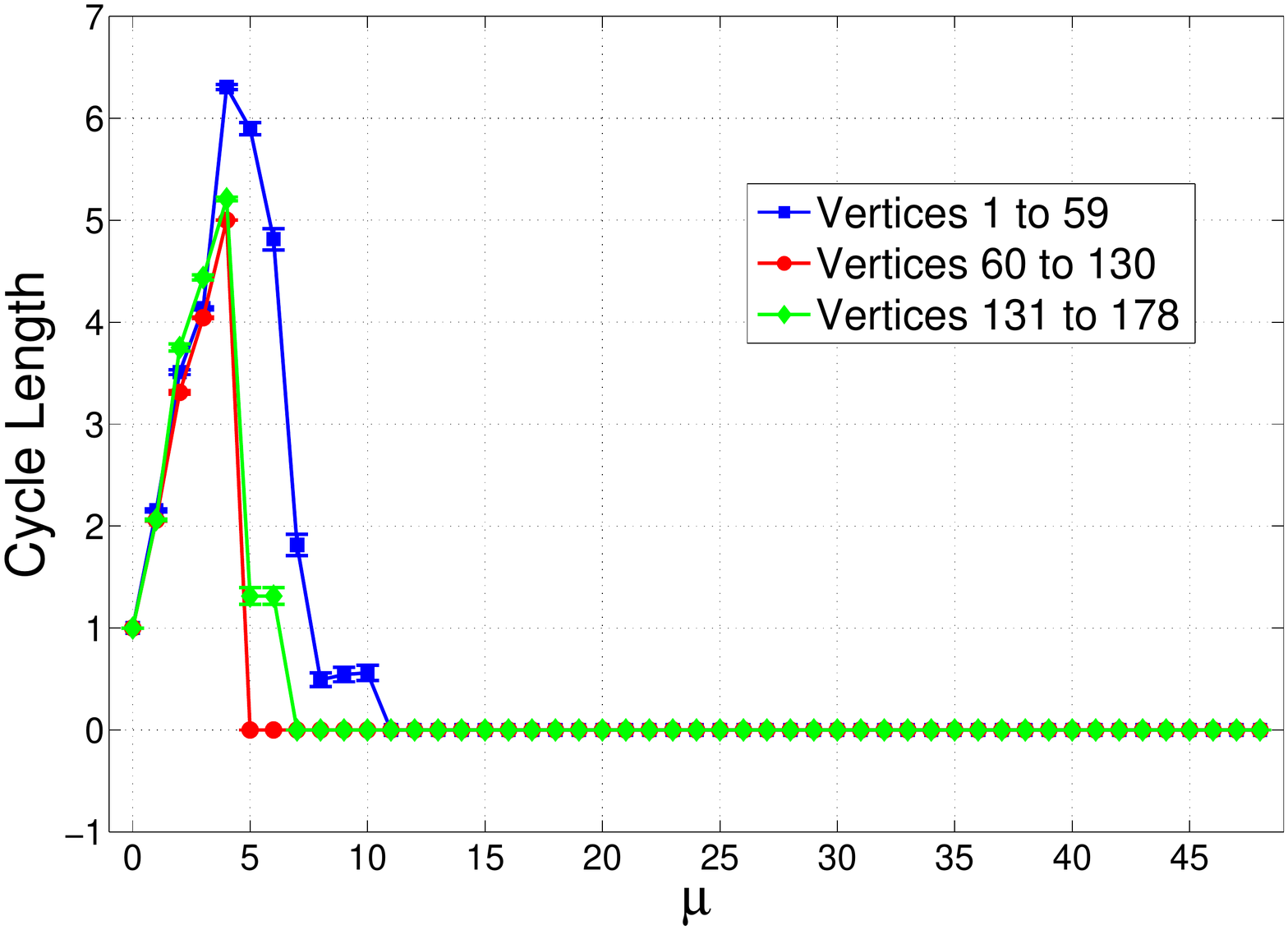}\label{fig:Wine-cycle-length}}%

    \caption{Behavior of the transient and cycle lengths of the Wine data set. Network's parameters: $k = 1$ and $\epsilon = 0.03$. (a) Transient lengths vs. $\mu_{c}$. (b) Cycle lengths vs. $\mu_{c}$.
    }%
    \label{fig:iris-transient-cycle}
\end{figure}

Continuing our exploration of this interesting phenomenon, we now turn our attention to two well-known data sets from the UCI Machine Learning Repository \cite{UCI2010}: Iris (balanced classes) and Wine (unbalanced classes). Consider Figs. \ref{fig:Iris-transient-length} and \ref{fig:Iris-cycle-length}, where it is depicted the transient and cycle lengths for the classes of the Iris data set, and Figs. \ref{fig:Wine-transient-length} and \ref{fig:Wine-cycle-length}, in which it is displayed the same information for the Wine data set. With respect to the transient length behavior, we can see that, for both data sets, as $\mu$ increases, the transient length also increases. However, when $\mu$ is sufficiently large,  the components' transient lengths settle down in a flat region (like the previous case). On the other hand, for both data sets, the cycle length behavior is rather interesting, which can be roughly divided in three different regions: (i) for a small $\mu$, it is directly proportional to $\mu$; (ii) for intermediate values of $\mu$, it is inversely proportional to $\mu$; and (iii) for sufficiently large values of $\mu$, it also settles down in a steady region. One can interpret these results as follows:

\begin{itemize}
  \item When $\mu$ is small, it is very likely that the transient and cycle parts will also be small, because the memory of the tourist is very limited. We can conceive this as a walk with almost no restrictions;

  \item When $\mu$ assumes an intermediate value, the transient length keeps increasing but the cycle length reaches a peak and starts to decrease afterwards. This peak characterizes the topological complexity of the component and varies from one to another. Hence, this is the most important region for capturing pattern formation of the class component by using the topological structure of the network.

  \item When $\mu$ is large, the tourist has a greater chance of getting trapped in a vertex of the graph, once all the neighborhood of the visited vertex is contained in the memory window $\mu$. In this scenario, the transient length is expected to be very high and the cycle length, null. This phenomenon explains the steady regions in Figs. \ref{fig:Iris-transient-length} and \ref{fig:Iris-cycle-length}. In this region, we can say that the tourist walks have already covered all the global aspects of the class component, and increasing the memory length $\mu$ will not capture any new topological features or pattern formation of the class. In this scenario, it is said that the tourist walks have completely described the topological complexity of the class component (saturation). In view of this, the calculation of tourist walks by further increasing $\mu$ is redundant.

\end{itemize}

This analysis suggests that the accuracy of the high level classifier may not change given that we choose a $\mu_{c}$ residing near these steady regions. This means that higher values for $\mu_{c}$ will only cause redundant computations and the accuracy will not be enhanced.   In order to check this, Figs. \ref{fig:Iris-Accuracy} and \ref{fig:Wine-Accuracy} reveal the behavior of the hybrid classification framework for different values of $\mu_{c}$. We have used three distinct compliance terms, namely $\lambda \in \{0, 0.05, 0.6\}$ for the Iris data set and $\lambda \in \{0, 0.06, 0.7\}$ for the Wine data set. When $\lambda =0$, only the low level classifier is used, in such a way that the value of $\mu_{c}$ is irrelevant, since the high level term is disabled. With respect to the other cases, when $\mu_{c} \ge 20$, the model provides the same accuracy rates for both data sets, confirming our prediction that, once it reaches the steady region where the transient and cycle lengths settle down, subsequent increases of $\mu_{c}$ do not change the accuracy of the model. In practical terms, according to our simulations, fixing the critical length $\mu_{c}$ as $10 - 30\%$ of the component's size is enough to get satisfactory results.

\begin{figure} [!htb]
    \centering
    \subfloat[]
    {\includegraphics[scale = 0.16]{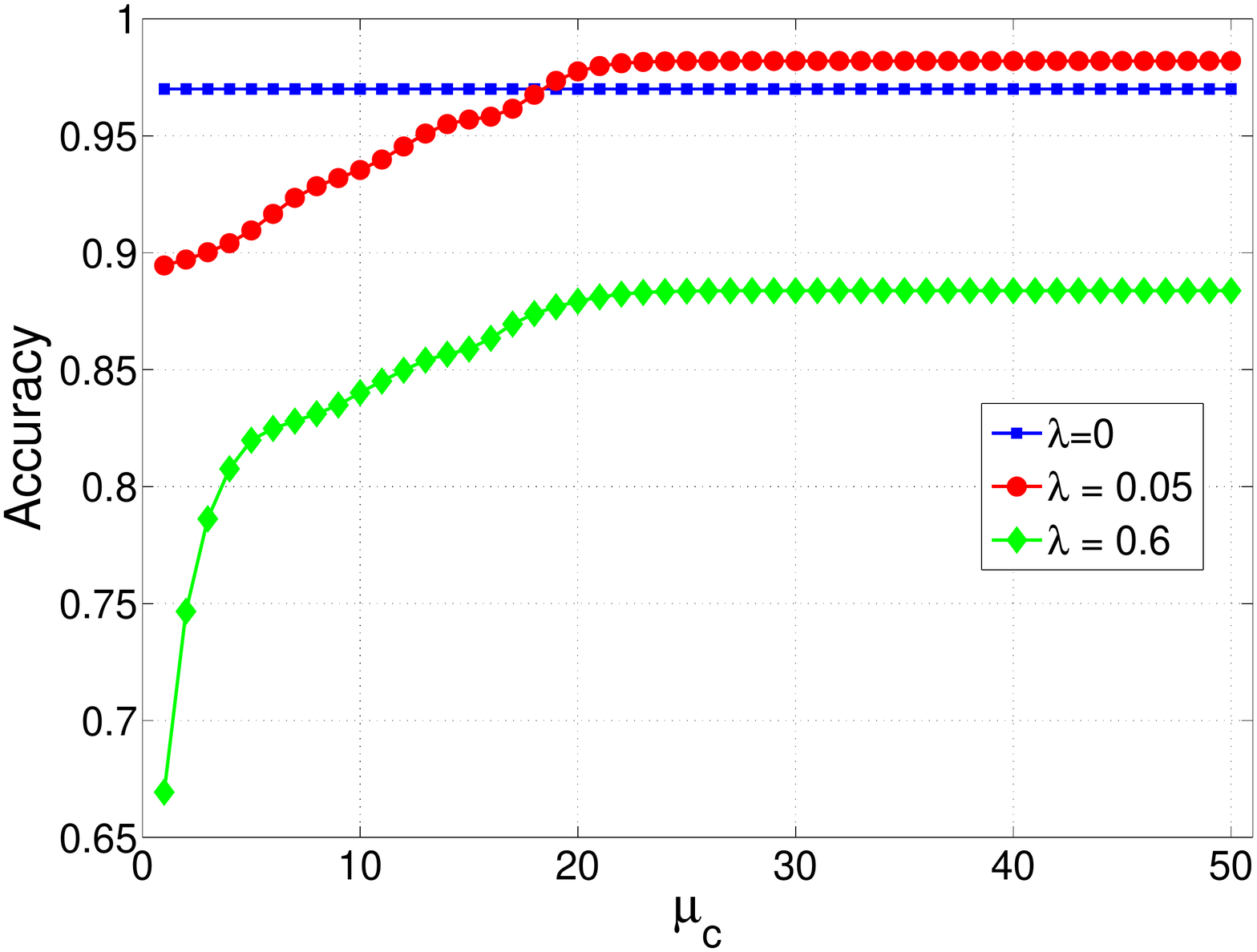}\label{fig:Iris-Accuracy}}%
    \subfloat[]
    {\includegraphics[scale = 0.16]{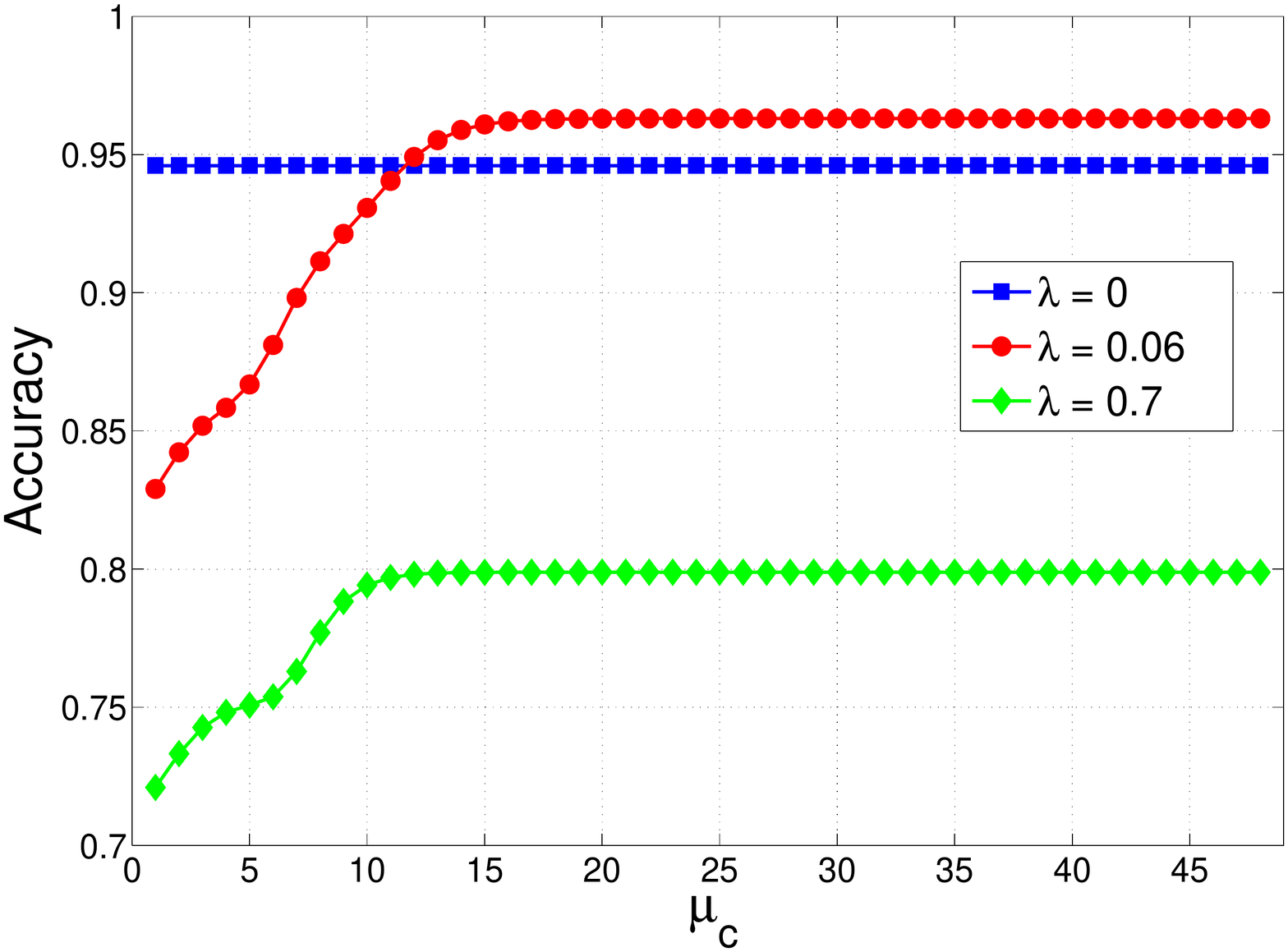}\label{fig:Wine-Accuracy}}%

    \caption{Accuracy rate vs. $\mu_{c}$ for three different values of the compliance term $\lambda$.
    The mean of a stratified $10$-fold cross-validation is reported.
    (a) Iris data set ($\alpha_{c} = 0.6$, $\alpha_{t} = 0.4$, low level classifier: fuzzy SVM with RBF kernel - $C=2^{-2}$ and $\gamma = 2^{3}$) and
    (b) Wine data set ($\alpha_{c} = 0.6$, $\alpha_{t} = 0.4$, low level classifier: weighted $k$-NN - $k = 5$).}%
    \label{fig:tourist-accuracy-behavior}
\end{figure}

\subsection{Simulations on Real-World Data Sets}
\label{sec:real-sim}

In this section, we will apply the proposed framework to several well-known UCI data sets. The most relevant metadata of each data set is given in Table \ref{tab:uci-datasets}. For a detailed description, refer to \cite{UCI2010}. Concerning the numerical attributes, the reciprocal of the Euclidean distance is employed. For categorical examples, the overlap similarity measure \cite{Boriah2008} is utilized. All data sets are submitted to a standardization pre-processing step.

\begin{table}[!htbp]
  \centering
  \small\addtolength{\tabcolsep}{-3.5pt}
  \caption{Brief information of the data sets.}
    \begin{tabular}{cccccc}
    \addlinespace
    \toprule
     \textbf{Data Set}  & \textbf{\# Samples} & \textbf{\# Dimensions} & \textbf{\# Classes} & $\mathbf{\alpha_{t}}$ & $\mathbf{\alpha_{c}}$ \\

    \midrule
    \textbf{Yeast} & 1484   & 8      & 10 & 0.40   & 0.60\\
    \textbf{Teaching} & 151    & 5      & 3 & 0.50   & 0.50\\
    \textbf{Zoo} & 101    & 16     & 7 & 0.30   & 0.70\\
    \textbf{Wine} & 178    & 13     & 3 & 0.40   & 0.60\\
    \textbf{Iris} & 150    & 4      & 3 & 0.40   & 0.60\\
    \textbf{Glass} & 214    & 9      & 6 & 0.50   & 0.50\\
    \textbf{Vehicle} & 846    & 18     & 4 & 0.60   & 0.70\\
    \textbf{Letter} & 20000  & 16     & 26 & 0.40   & 0.60\\
    \bottomrule
    \end{tabular}%
  \label{tab:uci-datasets}%
\end{table}%

Here, the high level classifier is composed of the best weighted combination of transient and cycle lengths. The optimization process is done by encountering $\alpha_{t} \times \alpha_{c} \in \{0, 0.1,\hdots, 1\} \times \{0, 0.1,\hdots, 1\}$ (search space), subjected to $\alpha_{t} + \alpha_{c} = 1$, which result in the highest accuracy rate of the model. The critical memory length is fixed to $\mu_{c}=0.3n_{\mathrm{max}}$, where $n_{\mathrm{max}}$ indicates the size of the largest component. The parameter optimization results are given in last two columns of Table \ref{tab:uci-datasets}.

Here, we will deal with two kinds of high level classifiers: (i) one in which the tourist walks are performed in a \emph{network} constructed from the vector-based data set and (ii) one in which the tourist walks are realized in a \emph{lattice}, i.e., the tourist is free to visit any other data site apart from the ones in the memory window $\mu$. In the latter case, it is expected that the walker will perform long jumps in the data set once the memory length $\mu$ assumes large values. It will be verified that such mechanism is not very welcomed in classification tasks. Furthermore, this serves as a strong argument for the employment and introduction of network-based high level classifiers.

%
%
%
%

Table \ref{tab:uci-results} reports the results obtained by the proposed technique on the data sets listed in Table \ref{tab:uci-datasets}. For comparison purposes, we evaluate the performance of the framework against different low level classifiers: Bayesian networks \cite{Neapolitan2003}, Weighted $k$-nearest neighbors \cite{Hastie2009}, Multi-layer perceptrons (MLP) \cite{Rumelhart1988}, Multi-class SVM (M-SVM) \cite{Lin2002,Abe2002}. The outcome of each algorithm is estimated by the average value over hundred runs of a stratified $10$-fold cross-validation process. Also, for each result, three different types of results are indicated as follows:

\begin{list}{\labelitemi}{\leftmargin=1em}
    \item ``Pure'' row: only the low level classifier is utilized ($\lambda = 0$). In this case, inside the parentheses are indicated the best parameters obtained from the optimization process for each technique;

    \item ``Networkless'' row: a mixture of low and high level classifiers is employed. The value inside the parentheses indicates the best compliance term $\lambda$. The high level classifier is constructed using the best weighted combination of transient and cycle lengths with the weights respecting Table \ref{tab:uci-datasets}. Moreover, the tourist walks are performed in a networkless environment, i.e., the tourist can visit any other site (data item) apart from those contained in the memory window $\mu$.

    \item ``Network'' row: the same setup as before, but the tourist walks are conducted on a networked environment. In this case, the tourist can only visit vertices (items) that are in the neighborhood and not in the memory window $\mu$. Here, the network in the training phase is built using $k = 1$ and $\epsilon = 0.03$. The values inside the parentheses exhibit: the $\epsilon$ used in the classification phase and the best compliance term $\lambda$, respectively.
\end{list}

For the sake of clarity, take the first entry of Table \ref{tab:uci-results}. The pure low level classifier Bayesian Networks achieved an accuracy rate of $57.8 \pm 2.6$ ($\lambda = 0$). However, if we use the proposed technique in a networkless environment, the accuracy rate is refined, achieving $58.6 \pm 2.3$ when $\lambda = 0.04$. Now, when the proposed technique is used in a network environment, the accuracy rate reaches $66.3 \pm 2.6$ when $\lambda = 0.28$. In general, the proposed technique is able to boost the accuracy rates of the data sets under analysis. Furthermore, we can see that the networked high level classifier can outperform the networkless version.

\begin{table*}[!htbp]
  \addtolength{\tabcolsep}{-4pt}
  \centering
  \caption{Accuracy rate achieved by several low level classification techniques and the high level classifier with and without networks. In the row named ``Pure'', the optimized parameters of each low level technique are reported as follows: Weighted $k$-NN ($k$), MLP (number of layers, learning rate, momentum), and fuzzy M-SVM ($C$, $\gamma$). In the row denominated ``Networkless'', the best compliance term is reported in the pair of parentheses. In the row called ``Network'', the $\epsilon$ (classification phase) and the best compliance term are exhibited in the pair of parentheses.}

    \begin{tabular}{cccccc}
    \addlinespace
    \toprule
     \textbf{Data Set}      &   \textbf{Type}     & \textbf{Bayesian Networks} & \textbf{Weighted k-NN} & \textbf{MLP} & \textbf{Fuzzy M-SVM} \\
    \bottomrule

    \multicolumn{1}{c}{\multirow{3}[0]{*}{\textbf{Yeast}}} & \textbf{Pure} & $57.8 \pm 2.6$   & $60.9 \pm 3.6$ ($16$)  & $56.2 \pm 3.9$ ($4, 0.3, 0.2$)  & $58.9 \pm 4.8$ ($2^{11}, 2^{0}$) \\

    \multicolumn{1}{c}{} & \textbf{Networkless} & $58.6 \pm 2.3$ ($0.04$)  & $62.0 \pm 3.2$ ($0.07$) & $56.9 \pm 2.5$ ($0.05$)  & $60.2 \pm 4.6$ ($0.14$)\\

    \multicolumn{1}{c}{} & \textbf{Network} & $66.3 \pm 2.6$ ($0.05$, $0.28$)  & $65.7 \pm 4.0$ ($0.03$, $0.19$)  & $63.3 \pm 2.9$ ($0.05$, $0.23$)  & $69.8 \pm 3.8$ ($0.05$, $0.28$)\\

    \midrule

    \multicolumn{1}{c}{\multirow{3}[0]{*}{\textbf{Teaching}}} & \textbf{Pure} & $61.3 \pm 8.8$   & $63.0 \pm 12.3$ ($9$)  & $60.9 \pm 9.4$ ($7, 0.2, 0.4$)  & $52.5 \pm 7.9$ ($2^{6}, 2^{3}$) \\

    \multicolumn{1}{c}{} & \textbf{Networkless} & $63.5 \pm 9.3$ ($0.18$)  & $63.8 \pm 10.6$ ($0.12$)  & $62.0 \pm 7.7$ ($0.13$)  & $55.3 \pm 8.6$ ($0.18$)\\

    \multicolumn{1}{c}{} & \textbf{Network} & $67.2 \pm 6.6$ ($0.03$, $0.24$)  & $68.5 \pm 7.4$ ($0.04$, $0.19$)  & $67.8 \pm 6.1$ ($0.04$, $0.26$)  & $62.7 \pm 6.9$ ($0.04$, $0.33$)\\

    \midrule

    \multicolumn{1}{c}{\multirow{3}[0]{*}{\textbf{Zoo}}} & \textbf{Pure} & $95.9 \pm 4.3$   & $96.2 \pm 5.8$ ($1$)  & $96.1 \pm 6.9$ ($3, 0.4, 0.5$)  & $96.3 \pm 6.4$ ($2^{1}, 2^{1}$) \\

    \multicolumn{1}{c}{} & \textbf{Networkless} & $96.0 \pm 3.6$ ($0.02$)  & $96.5 \pm 5.2$ ($0.04$)  & $96.4 \pm 6.6$ ($0.04$)  & $96.5 \pm 4.5$ ($0.05$)\\

    \multicolumn{1}{c}{} & \textbf{Network} & $97.3 \pm 4.3$ ($0.02$, $0.06$)  & $97.5 \pm 4.4$ ($0.02$, $0.09$)  & $97.5 \pm 4.2$ ($0.02$, $0.09$)  & $97.5 \pm 2.3$ ($0.02$, $0.08$)\\

    \midrule

    \multicolumn{1}{c}{\multirow{3}[0]{*}{\textbf{Wine}}} & \textbf{Pure} & $98.8 \pm 0.7$   & $94.6 \pm 1.4$ ($1$)  & $97.8 \pm 0.5$ ($3, 0.6, 0.4$)  & $98.9 \pm 0.2$ ($2^{11}, 2^{2}$) \\

    \multicolumn{1}{c}{} & \textbf{Networkless} & $98.8 \pm 0.7$ ($0.00$)  & $94.6 \pm 2.1$ ($0.00$)  & $97.9 \pm 0.3$ ($0.03$)  & $98.9 \pm 0.2$ ($0.00$)\\

    \multicolumn{1}{c}{} & \textbf{Network} & $98.8 \pm 0.7$ (-, $0.00$)  & $96.3 \pm 1.0$ ($0.06$, $0.06$)  & $98.6 \pm 0.3$ ($0.02$, $0.09$)  & $98.9 \pm 0.2$ (-, $0.00$)\\

    \midrule

    \multicolumn{1}{c}{\multirow{3}[0]{*}{\textbf{Iris}}} & \textbf{Pure} & $92.7 \pm 1.2$   & $97.9 \pm 3.3$ ($19$) & $94.0 \pm 2.9$ ($1, 0.3, 0.2$)  & $97.0 \pm 4.6$ ($2^{-2}, 2^{3}$) \\

    \multicolumn{1}{c}{} & \textbf{Networkless} & $93.2 \pm 1.9$ ($0.07$)  & $97.9 \pm 3.3$ ($0.00$) & $94.8 \pm 2.8$ ($0.10$)  & $97.2 \pm 3.7$ ($0.09$)\\

    \multicolumn{1}{c}{} & \textbf{Network} & $94.9 \pm 0.4$ ($0.01$, $0.15$)  & $98.3 \pm 0.6$ ($0.01$, $0.05$)  & $96.5 \pm 1.1$ ($0.02$, $0.21$)  & $98.1 \pm 1.0$ ($0.02$, $0.05$)\\

    \midrule

    \multicolumn{1}{c}{\multirow{3}[0]{*}{\textbf{Glass}}} & \textbf{Pure} & $70.6 \pm 7.7$   & $71.8 \pm 9.0$ ($1$) & $67.3 \pm 5.0$ ($7, 0.1, 0.3$)  & $72.4 \pm 5.6$ ($2^{10}, 2^{4}$) \\

    \multicolumn{1}{c}{} & \textbf{Networkless} & $71.5 \pm 5.7$ ($0.14$)  & $72.7 \pm 7.1$ ($0.16$)  & $68.8 \pm 3.2$ ($0.22$)  & $73.3 \pm 3.9$ ($0.12$)\\

    \multicolumn{1}{c}{} & \textbf{Network} & $79.2 \pm 5.3$ ($0.03$, $0.32$)  & $79.7 \pm 5.0$ ($0.35$)  & $77.4 \pm 5.5$ ($0.02$, $0.30$)  & $80.1 \pm 4.3$ ($0.03$, $0.31$)\\

    \midrule

    \multicolumn{1}{c}{\multirow{3}[0]{*}{\textbf{Vehicle}}} & \textbf{Pure} & $68.1 \pm 3.8$   & $67.6 \pm 4.1$ ($5$)  & $69.0 \pm 4.4$ ($5, 0.3, 0.2$)  & $84.4 \pm 3.4$ ($2^{10}, 2^{3}$) \\

    \multicolumn{1}{c}{} & \textbf{Networkless} & $70.0 \pm 2.6$ ($0.19$)  & $69.4 \pm 2.5$ ($0.18$)  & $70.3 \pm 3.8$ ($0.13$)  & $84.4 \pm 3.4$ ($0.00$)\\

    \multicolumn{1}{c}{} & \textbf{Network} & $74.1 \pm 2.9$ ($0.05$, $0.26$)  & $73.6 \pm 3.0$ ($0.05$, $0.24$)  & $74.7 \pm 3.6$ ($0.07$, $0.29$)  & $84.9 \pm 2.7$ ($0.04$, $0.07$)\\

    \midrule

    \multicolumn{1}{c}{\multirow{3}[0]{*}{\textbf{Letter}}} & \textbf{Pure} & $74.4 \pm 5.6$   & $96.0 \pm 7.6$ ($1$)  & $88.9 \pm 9.9$ ($3, 0.2, 0.4$)  & $94.8 \pm 1.7$ ($2^{6}, 2^{4}$) \\

    \multicolumn{1}{c}{} & \textbf{Networkless} & $75.5 \pm 4.6$ ($0.14$)  & $96.0 \pm 7.6$ ($0.00$)  & $89.3 \pm 7.4$ ($0.09$)  & $94.8 \pm 1.7$ ($0.00$)\\

    \multicolumn{1}{c}{} & \textbf{Network} & $77.8 \pm 3.4$ ($0.04$, $0.17$)  & $96.0 \pm 7.6$ (-, $0.00$)  & $92.1 \pm 4.1$ ($0.04$, $0.19$)  & $94.8 \pm 1.7$ (-, $0.00$)\\
    \bottomrule
    \end{tabular}%
  \label{tab:uci-results}%
\end{table*}%

\normalsize

\section{Application: Handwritten Digits Recognition}
\label{Simulation-Real}

In this section, we provide an application of the proposed technique to handwritten digits recognition. Specifically, in Subsect. \ref{Simulation-Real}, we detail the experimental results obtained from the modified NIST set, a data sets composed of tens of thousands of real handwritten digits. Still in this section, we present two simple examples to show how the proposed technique can be used for invariant pattern recognition.

While recognizing individual digits is only one of a myriad of problems that
involves specific designing of practical recognition system, it still is, undoubtedly,
an excellent benchmark for comparing shape recognition methods. The data set in
which we will conduct our studies hereon is named Modified NIST set \cite{Lecun1998}. This
data set provides a training set with $60 \ 000$ samples and a test set of $10 \ 000$
samples. Each image has $28 \times 28$ pixels.  With respect to the high level classifier, the network in the training phase is constructed using $k = 3$ and $\epsilon = 0.01$. In the classification phase, the same $\epsilon$ is used.

For comparison matters, we use the proposed high level classifier with $3$ distinct low level classifiers. The high level classifier will remain constant as we have been using, i.e., with an underlying network. The techniques that will be exploited are listed below:

\begin{itemize}
    \item A linear classifier implemented as an $1$-layer neural network. We use the same experimental setup given in \cite{Lecun1998};

    \item A $k$-nearest neighbor classifier with an Euclidean distance measure between input images ($k=3$);

    \item A $k$-nearest neighbor classifier with the similarity function given by a set of weighted eigenvalue measures \cite{Silva2012HighLevel}. Specifically, we use $\phi = 4$ eigenvalues and adopt the following $\beta$ function: $\beta(x) = 16\exp(\frac{x}{3})$.
\end{itemize}

Figure \ref{fig:MNIST-comparison} shows the performance of these techniques acting together with the high level classifier in a networked environment. Our main goal here is to reveal that a mixture of traditional and high level classifier is able to increase the accuracy rate. For example, the linear neural network reached $88\%$ of accuracy rate when only a traditional classifier  is applied. A small increase in the compliance term is responsible for increasing the accuracy rate to $91\%$ ($\lambda = 0.2$). Regarding the $k$-nearest neighbor algorithm, for a pure traditional classifier, we have obtained $95\%$ of accuracy rate, against $97.6\%$ ($\lambda = 0.25$). For the proposed weighted eigenvalue measure, we have obtained $98\%$ of accuracy rate when $\lambda = 0$, against $99.1\%$ when $\lambda = 0.2$. It is worth noting that the enhancement is significant. Even in the third case, the improvement is quite welcomed, because it is a hard task to increase an already very high accuracy rate. Moreover, one can see that maximum compliance term is intrinsic to the data set, since, for three completely distinct low level classifiers, the maximum accuracy rate is reached in the surroundings of $\lambda = 0.225$.

\begin{figure} [!htb]
    \centering
    \includegraphics[scale = \sizeOfFigure]{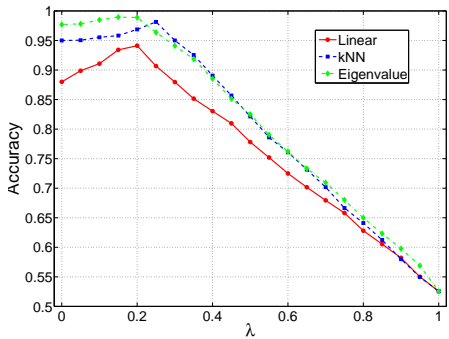}
    \caption{A detailed analysis of the impact of the compliance term $\lambda$ on different traditional low level techniques applied to the MNIST database. One can see that a mixture of the proposed traditional and high level techniques does give a boost in the accuracy rate in this real-world data set.
    }%
    \label{fig:MNIST-comparison}
\end{figure}

\begin{figure*} [!htb]
    \centering
    \subfloat[]
    {\includegraphics[scale = 0.36]{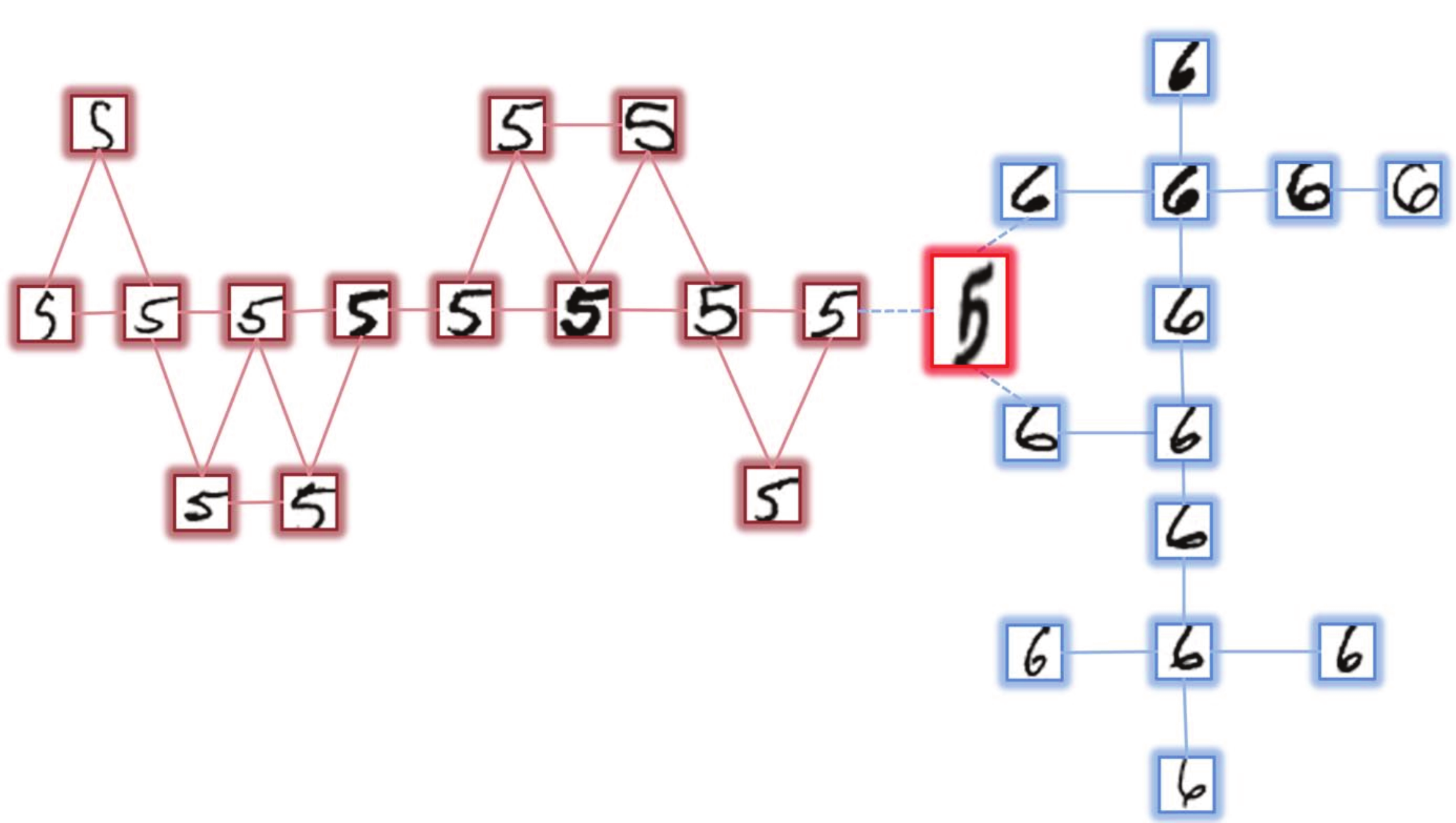}\label{fig:illustrative-network-1}}%
    \quad \subfloat[]
    {\includegraphics[scale = 0.36]{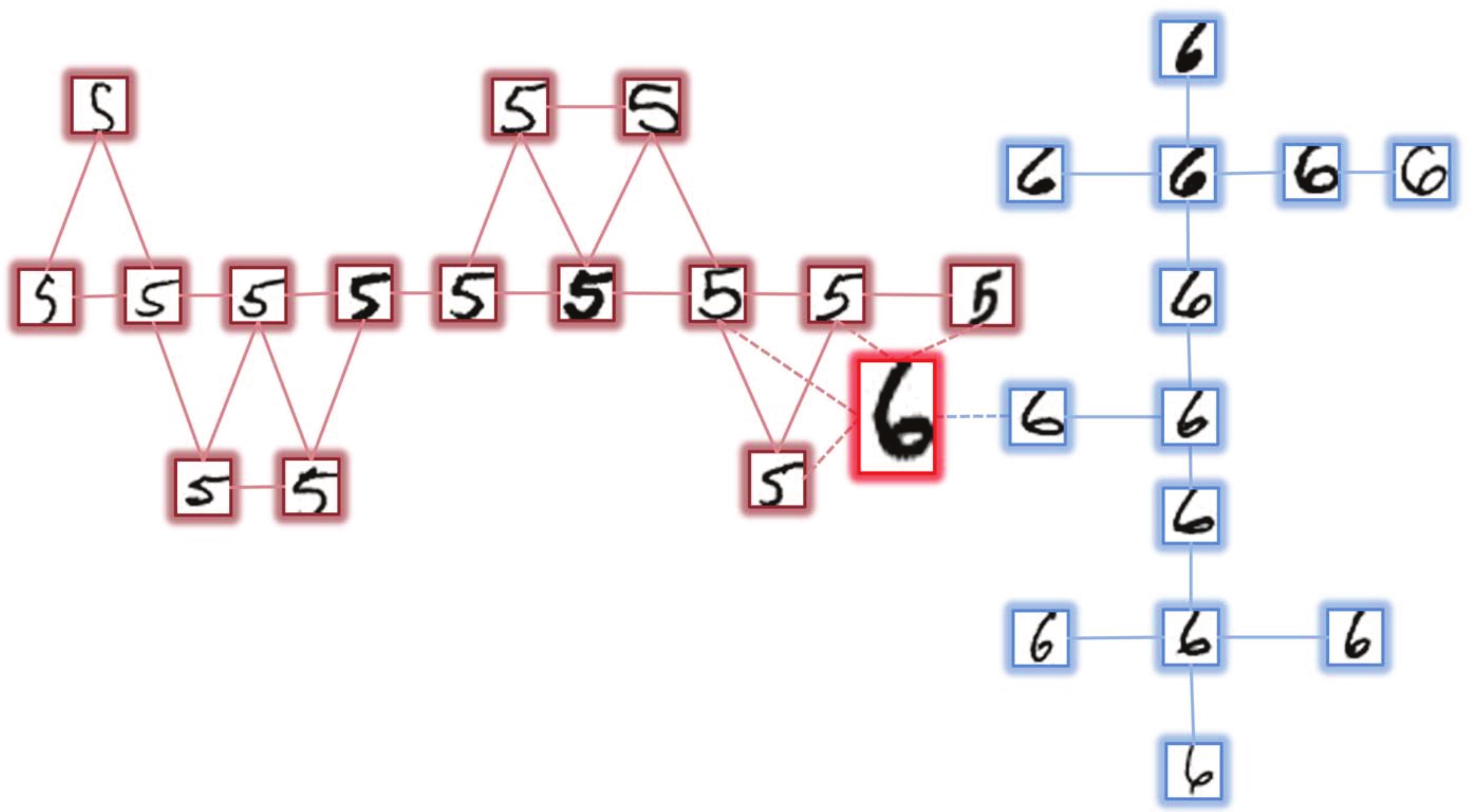}\label{fig:illustrative-network-2}}%

    \caption{Illustration of the pattern formation impact in a subset of samples extracted from the MNIST data set.
    The training instances are displayed by the brown (digit $5$) and blue (digit $6$) colors. The test instances are indicated
    by a red color (bigger sizes). Insertion of a test instance whose real class is: (a) the digit $5$ and (b) the digit $6$.}%
\end{figure*}

\begin{figure*} [!htb]
    \centering
    \subfloat[]
    {\includegraphics[scale = 0.16]{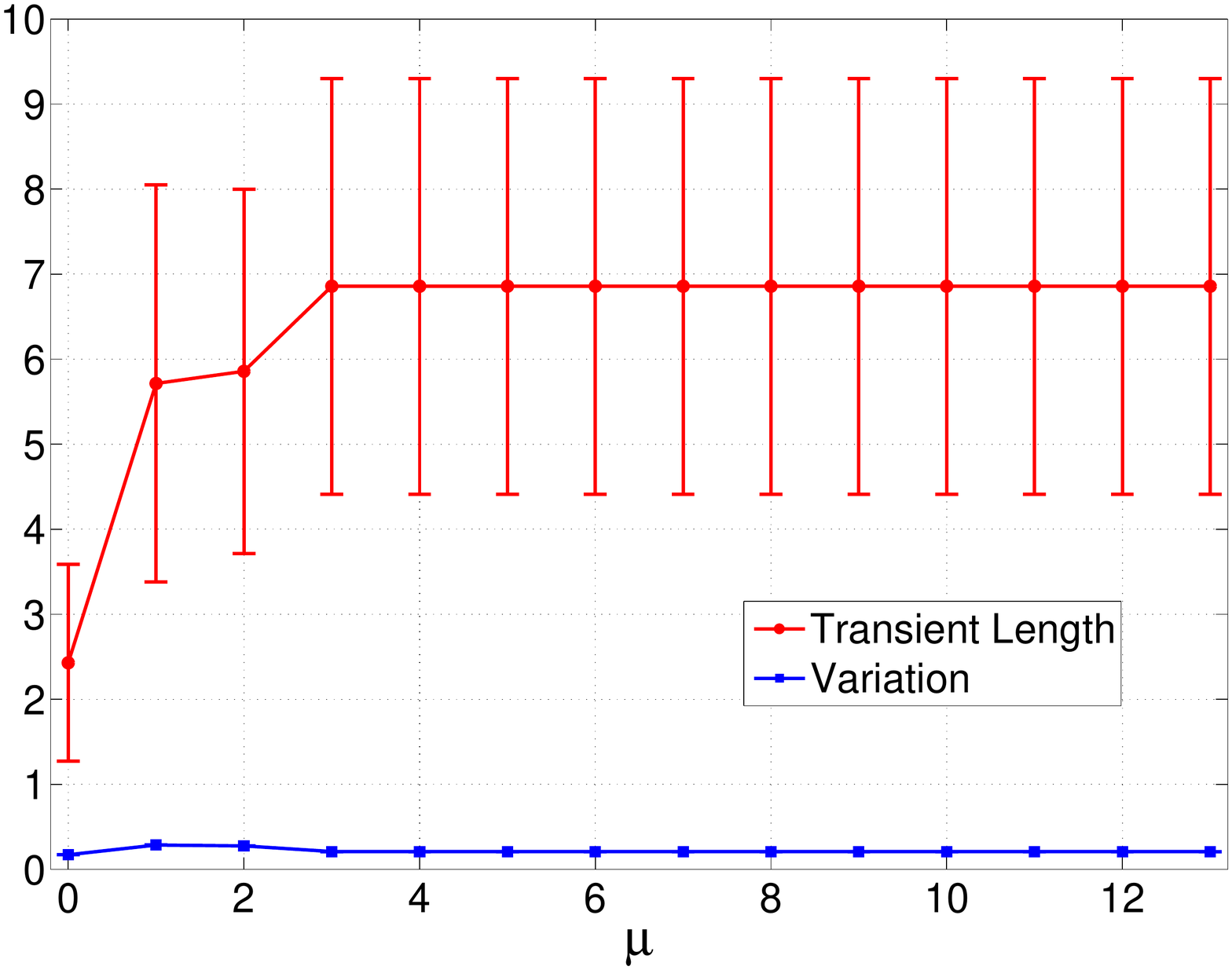}\label{fig:before-transientClass5}}%
    \subfloat[]
    {\includegraphics[scale = 0.16]{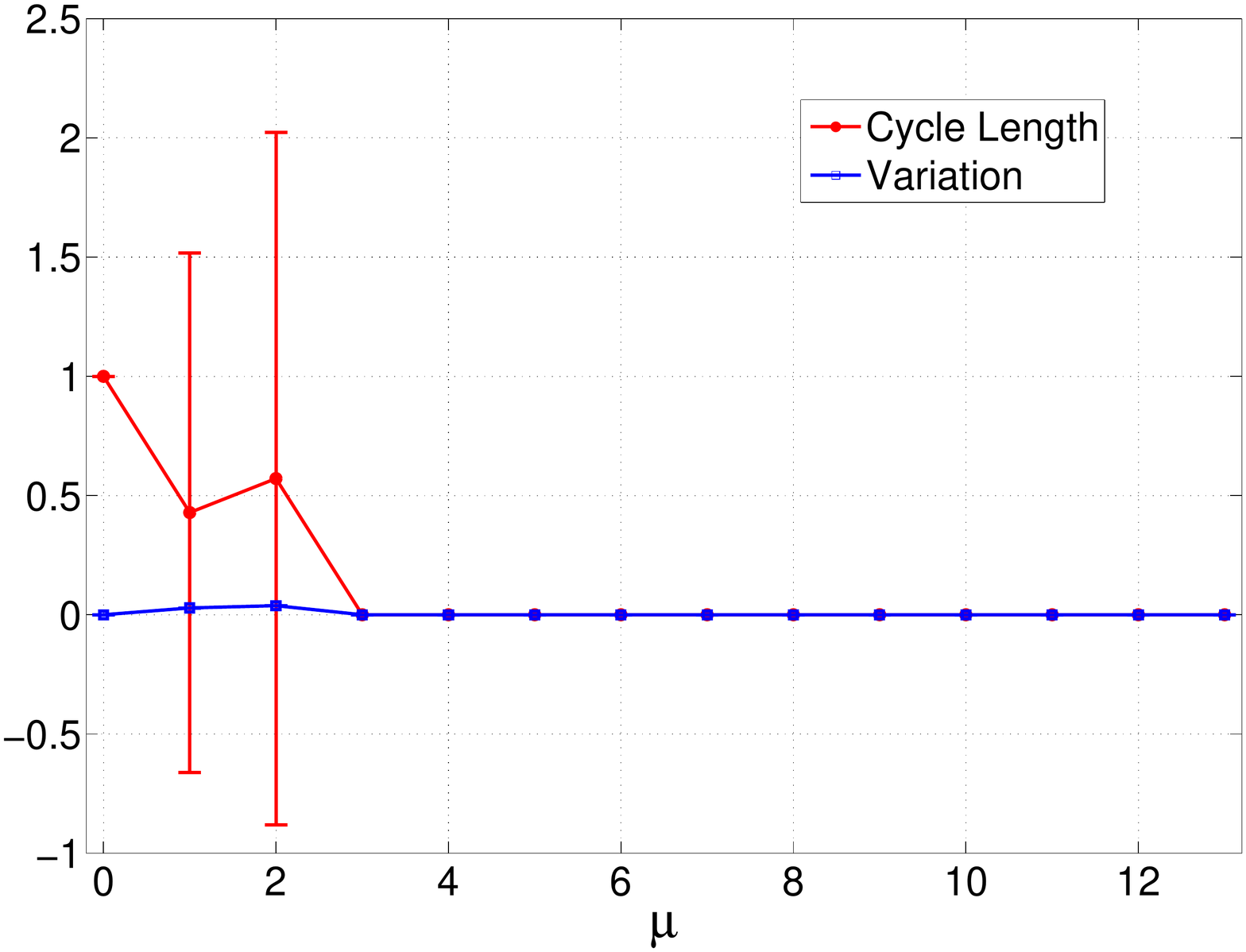}\label{fig:before-cycleClass5}}%
    \subfloat[]
    {\includegraphics[scale = 0.16]{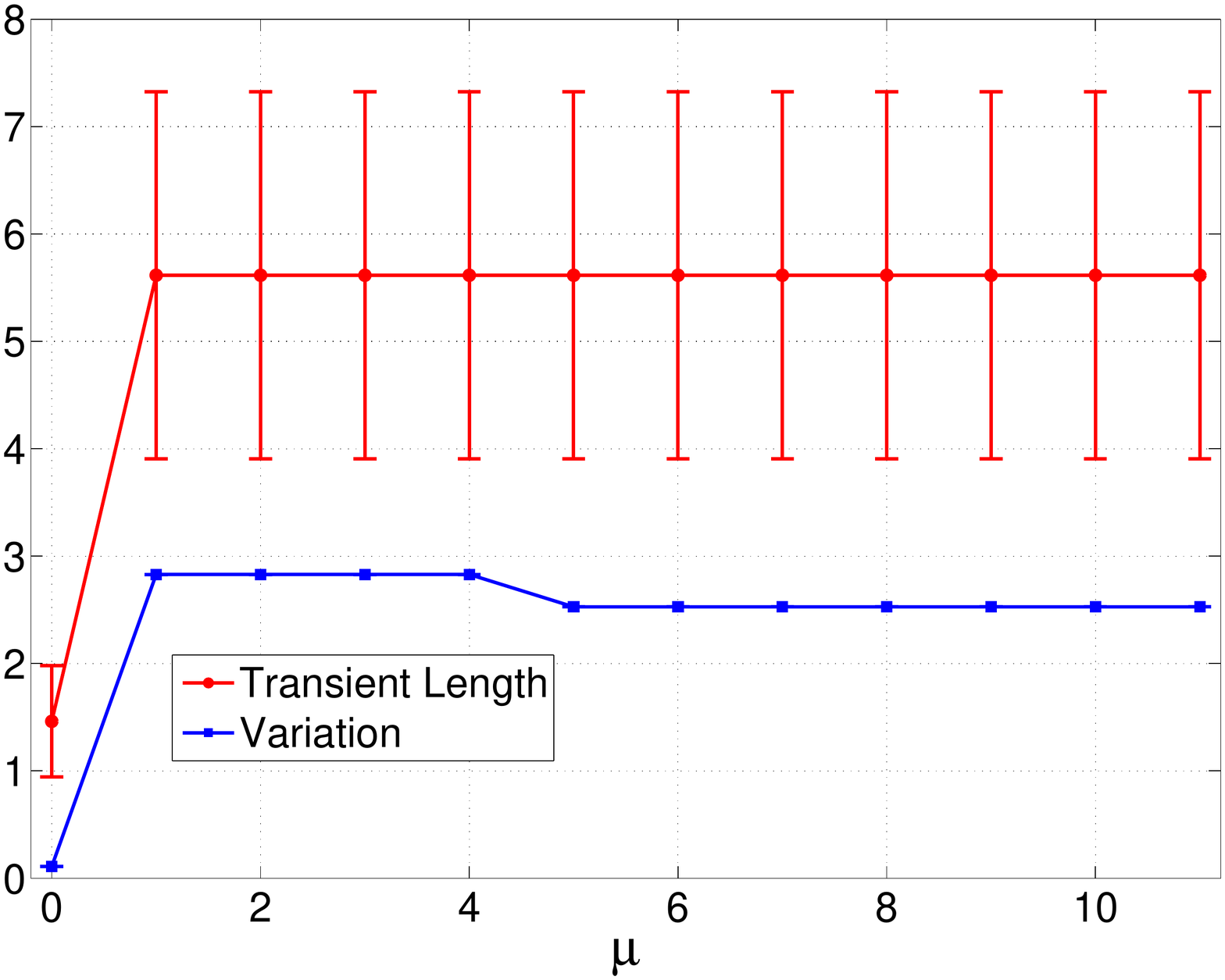}\label{fig:before-transientClass6}}%
    \subfloat[]
    {\includegraphics[scale = 0.16]{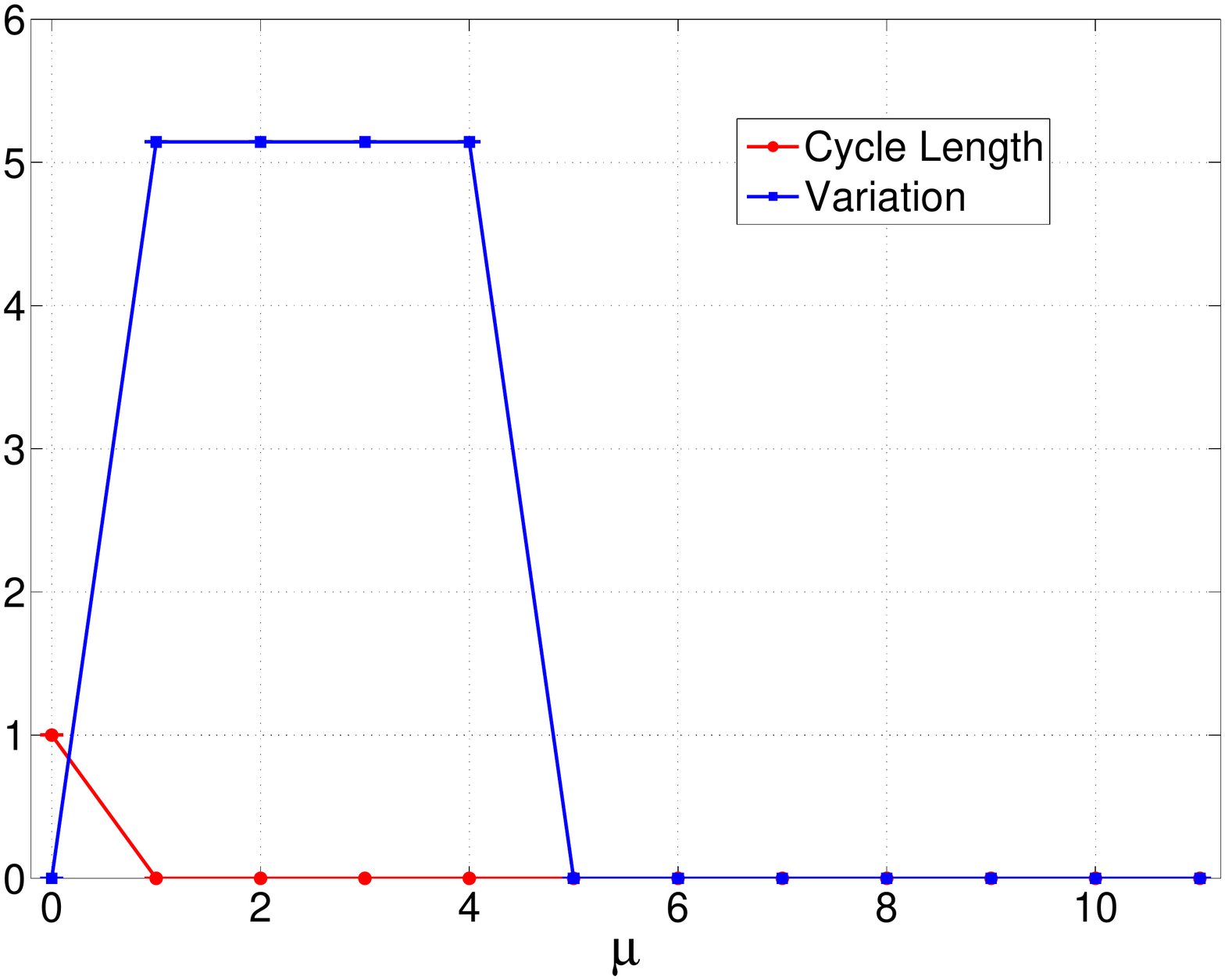}\label{fig:before-cycleClass6}}%

    \subfloat[]
    {\includegraphics[scale = 0.16]{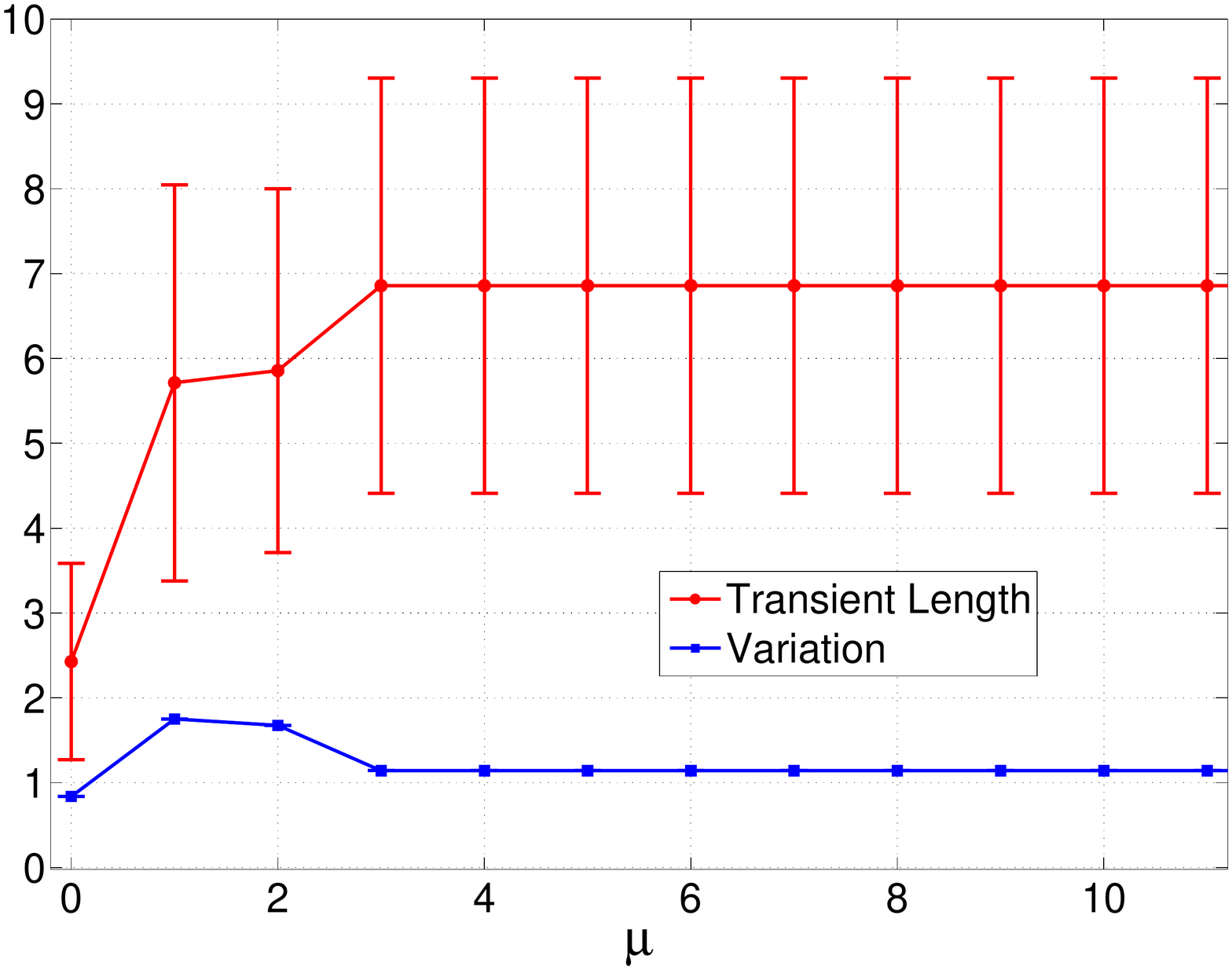}\label{fig:after-transientClass5}}%
    \subfloat[]
    {\includegraphics[scale = 0.16]{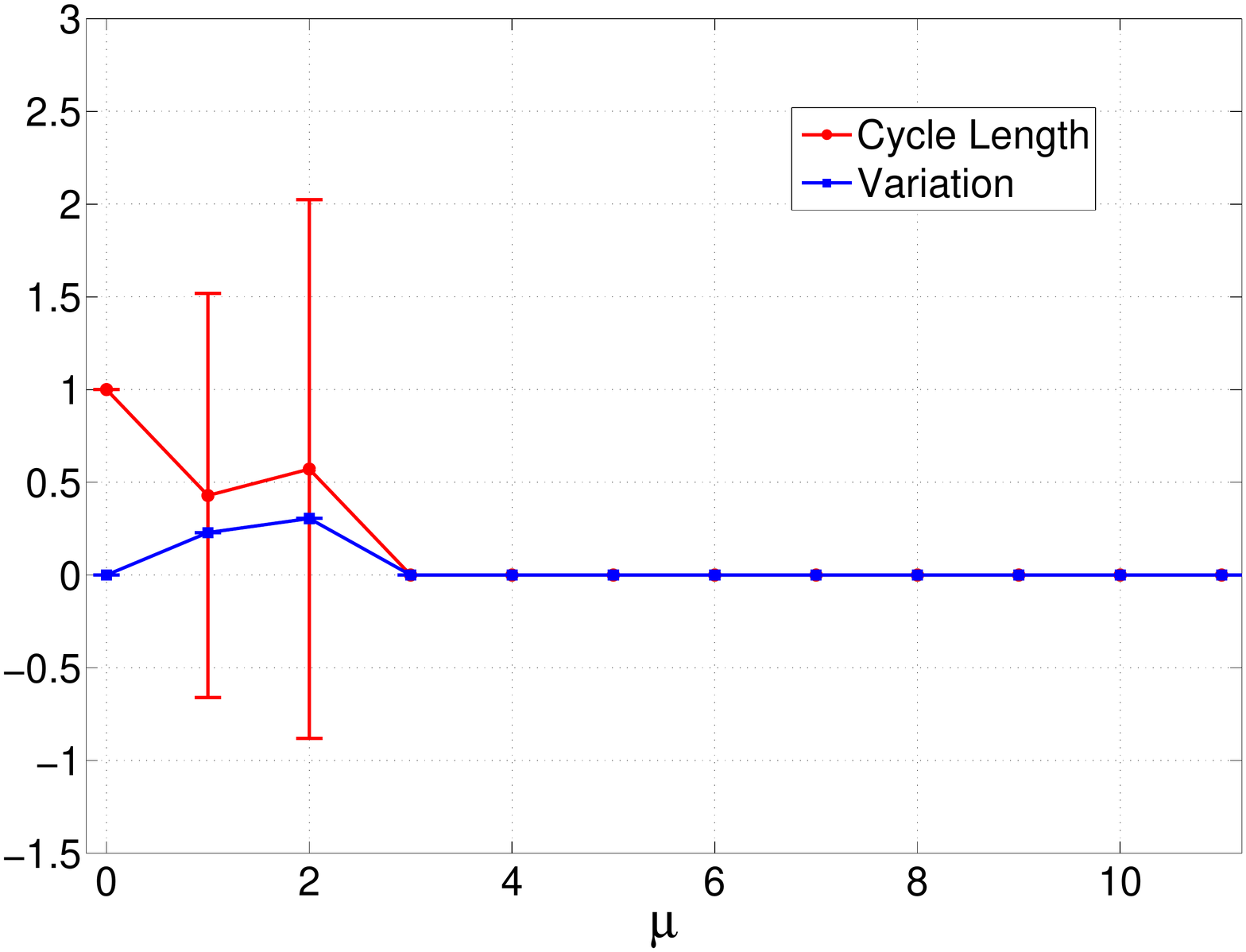}\label{fig:after-cycleClass5}}%
    \subfloat[]
    {\includegraphics[scale = 0.16]{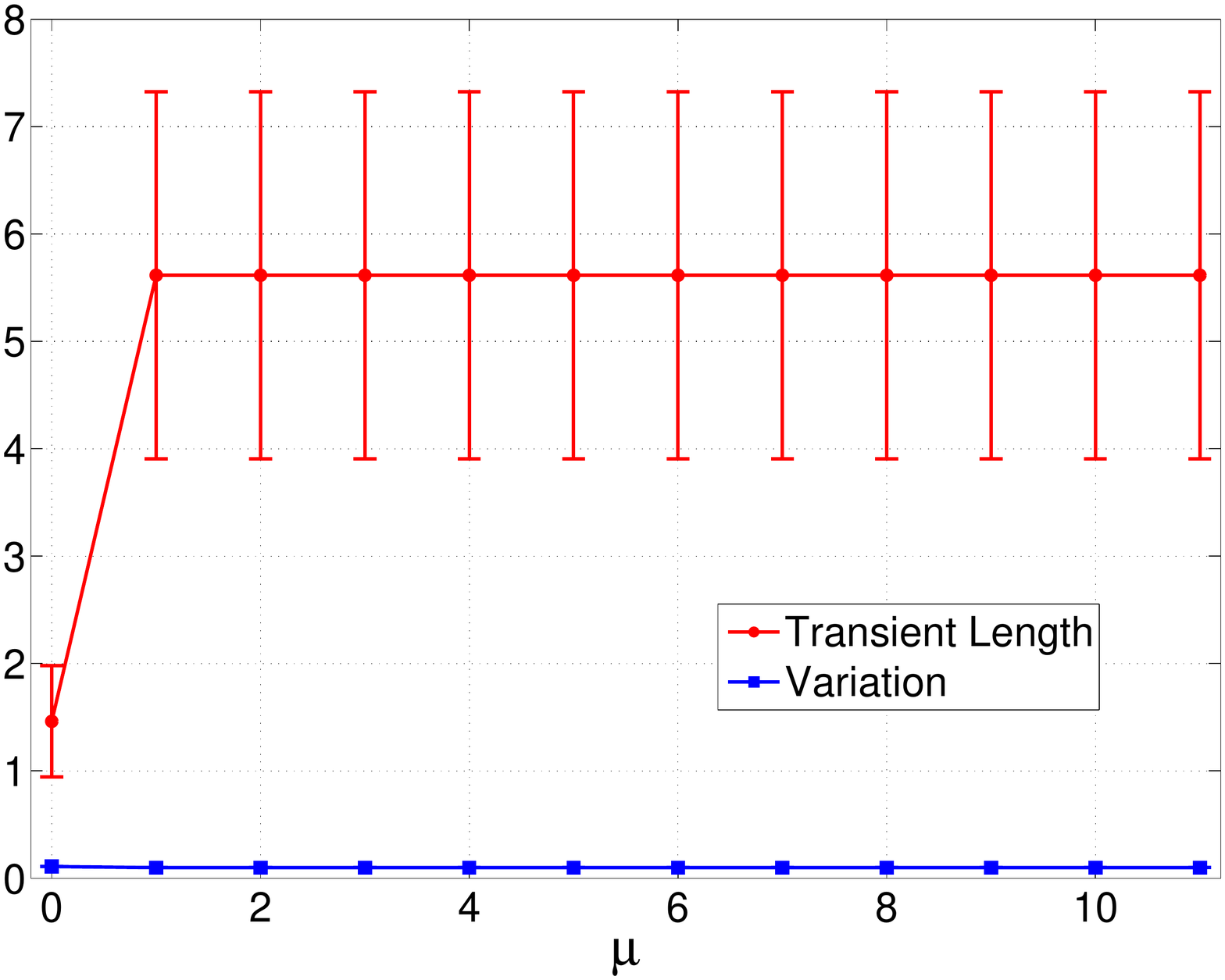}\label{fig:after-transientClass6}}%
    \subfloat[]
    {\includegraphics[scale = 0.16]{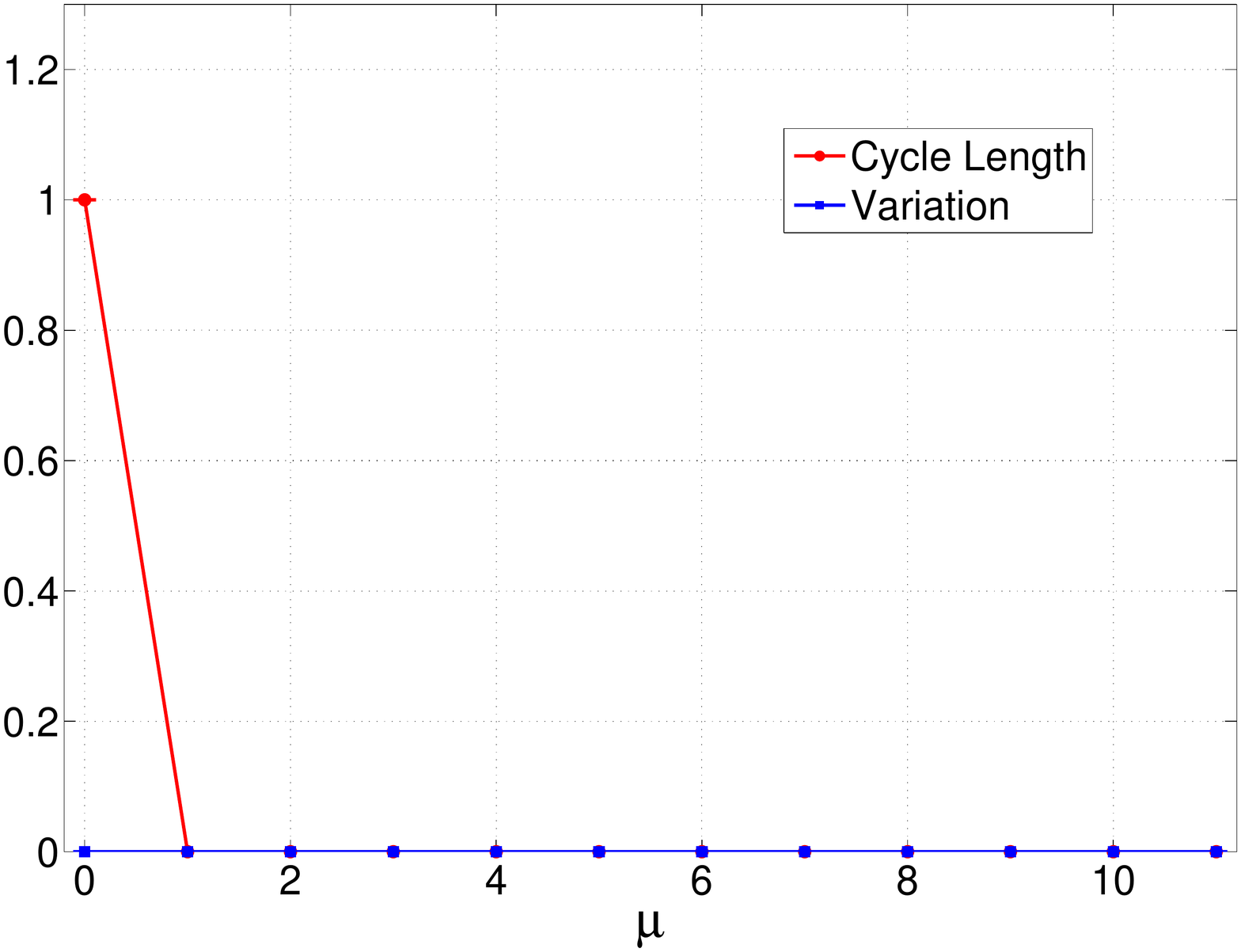}\label{fig:after-cycleClass6}}%

    \caption{Transient and cycle lengths and the corresponding variations due to the insertion of a test instance.
    With regard to Fig. \ref{fig:illustrative-network-1}: (a) and (b) transient and cycle lengths of the brown class (digit $5$) and their variations by virtue of the insertion of the red instance (test instance); (c) and (d) same information for the blue class (digit $6$). With regard to Fig. \ref{fig:illustrative-network-2}: (e) and (f) transient and cycle lengths of the brown class (digit $5$) and their corresponding variations due to the insertion of the red instance (test instance) with respect to the brown class; (g) and (h) same information for the blue class (digit $6$).
    }%
\end{figure*}

Figures \ref{fig:illustrative-network-1} and \ref{fig:illustrative-network-2} illustrate how the digit classification is carried out by using simple networks containing a small number of digits `5' and `6', randomly drawn from the MNIST data set. Firstly, let us consider the Fig. \ref{fig:illustrative-network-1}, where the digits `5' with brown boxes and the digits `6' with blue boxes represent the training set. The task now is to classify the test instance represented by the digit with a red box. If only low level classification is applied, the test digit is probably to be classified as a digit `6', because it has more neighbors of digit `6' than that of `5'. On the other hand, if we also consider high level classification, the test digit can be correctly classified as a digit `5', because it complies more to the pattern formed by training digits `5' than to the one formed by digits `6'. More specifically, if the test digit is put into the class `5', it just extends the already formed ``line" pattern. As a consequence, the inclusion of the test digit to the class `5' generates small variations of the component measures. However, if the test digit is inserted into the class `6', larger variations of the component measures occur, since cycles are formed in the component (before insertion of the test digit, there is no cycle in the component). Figures \ref{fig:before-transientClass5} and \ref{fig:before-cycleClass5} show the transient and cycle lengths as well as the corresponding variations as a function of $\mu$, when the test digit is inserted into the component of digit `5'. We see that the variations are very small, indicating the strong compliance of the test digit with the pattern formed by training digits `5'. On the other hand, Figs. \ref{fig:before-transientClass6} and \ref{fig:before-cycleClass6} show the same information, when the test digit is put into the component of digit `6'. Here, we see that larger variations occur, which means that the test digit does not conform to the pattern formed by the component of digits `6'. As a result, the test instance is correctly classified as a digit `5'. The same reasoning can be applied to the digit network shown by Fig. \ref{fig:illustrative-network-2}. In this case, the transient and cycle lengths as well as the corresponding variations are shown in Figs. \ref{fig:after-transientClass5} -  \ref{fig:after-cycleClass6}, when the test digit is inserted into the component of digit `5' or `6', respectively. In this situation, the test instance is classified as a digit `6'.

\section{Conclusions}
\label{sec:Conclusions}

In this work, we have proposed an alternative and novel technique for data classification, which combines both low and high level characteristics of the data. The former classifies data instances by their physical features and the latter measures the compliance of the test instance with the pattern formation of the input data. To this end, tourist walks have been employed to capture the complex topological properties of the network constructed from the input data. A quite interesting feature of the proposed technique is that the high level term's influence has to be increased in order to get correct classification as the complexity of the class configuration increased. This means that the high level term is specially useful in complex situations of classification. Also, it is worth observing that the application of the tourist walks dynamics in the context of high level classifier is also a novel approach in the literature. We have shown that, even though such walk is constructed under very simple rules, it is still able to capture topological features of the underlying network in a local to global basis. In addition, we uncover a critical memory length, above which no change occurs in the transient and cycle lengths of the network component.


We hope our work can provide an alternative way to the understanding of high level semantic machine learning. As a future work,  mechanisms for taking representative samples, rather than recalculating all the tourist walks for all the vertices in the network, are going to be considered.

\ifCLASSOPTIONcompsoc
  \section*{Acknowledgments}
\else
  \section*{Acknowledgment}
\fi

This work is supported by the São Paulo State Research Foundation
(FAPESP) and by the Brazilian National Research Council
(CNPq).

\ifCLASSOPTIONcaptionsoff
  \newpage
\fi



\bibliographystyle{IEEEtran}
\bibliography{IEEEabrv,TouristWalks_Christiano}
\end{document}